\newtheorem{defn}{Definition}
\newtheorem{lem}[defn]{Lemma}
\newtheorem{assum}[defn]{Assumption}
\newtheorem{thm}[defn]{Theorem}
\providecommand{\FRS}{\textcolor{black}{(\text{FRSopt})}}
\providecommand{\R}{\ensuremath \mathbb{R}}
\providecommand{\N}{\ensuremath \mathbb{N}}
\providecommand{\Later}[1]{}
\providecommand{\HIP}{SBM}
\providecommand{\naive}{na\"ive}
\providecommand{\Y}{\mathcal Y}
\providecommand{\PP}{\mathcal P}
\providecommand{\B}{\mathcal B}
\providecommand{\G}{\mathcal G}
\providecommand{\W}{\mathcal W}
\providecommand{\Q}{\mathcal Q}
\providecommand{\I}{\mathcal I}
\begin{document}

\title{Leveraging the Template and Anchor Framework for Safe, Online Robotic Gait Design}

\author{Jinsun Liu$^*$, Pengcheng Zhao$^*$, Zhenyu Gan, Matthew Johnson-Roberson, and Ram Vasudevan
\thanks{Jinsun Liu is with the Robotics Institute, University of Michigan, Ann Arbor, MI 48109 \texttt{jinsunl@umich.edu}}
\thanks{Pengcheng Zhao, Zhenyu Gan and Ram Vasudevan are with the Department of Mechanical Engineering, University of Michigan, Ann Arbor, MI 48109 \texttt{\{pczhao,ganzheny,ramv\}@umich.edu }}
\thanks{Matthew Johnson-Roberson is with the Department of Naval Architecture and Marine Engineering, University of Michigan, Ann Arbor, MI 48109 \texttt{mattjr@umich.edu}}
\thanks{This work is supported by the Ford Motor Company via the Ford-UM Alliance under award N022977 and by the Office of Naval Research under Award Number N00014-18-1-2575.}
\thanks{$*$These two authors contributed equally to this work.}
}

\maketitle

\begin{abstract}
Online control design using a high-fidelity, full-order model for a bipedal robot can be challenging due to the size of the state space of the model.
A commonly adopted solution to overcome this challenge is to approximate the full-order model (anchor) with a simplified, reduced-order model (template), while performing control synthesis.
Unfortunately it is challenging to make formal guarantees about the safety of an anchor model using a controller designed in an online fashion using a template model.
To address this problem, this paper proposes a method to generate safety-preserving controllers for anchor models by performing reachability analysis on template models while bounding the modeling error. 
This paper describes how this reachable set can be incorporated into a Model Predictive Control framework to select controllers that result in safe walking on the anchor model in an online fashion.
The method is illustrated on a 5-link RABBIT model, and is shown to allow the robot to walk safely while utilizing controllers designed in an online fashion.  
\end{abstract}
\begin{IEEEkeywords}
Bipeds, underactuated system, safety guarantee.
\end{IEEEkeywords}

\section{Introduction}

Legged robots are an ideal system to perform locomotion on unstructured terrains.
Unfortunately designing controllers for legged systems to operate safely in such situations has proven challenging. 
To robustly traverse such environments, an ideal control synthesis technique for legged robotic systems should satisfy several requirements.
First, since sensors perceive the world with a limited horizon, any algorithm for control synthesis should operate in real-time. 
Second, since modeling contact can be challenging, any control synthesis technique should be able to accommodate model uncertainty.  
Third, since the most appropriate controller may be a function of the environment and given task, a control synthesis algorithm should optimize over as rich a family of control inputs at run-time as possible.
Finally, since falling can be costly both in time and expense, a control synthesis technique should be able to guarantee the satisfactory behavior of any constructed controller. 
As illustrated in Fig. \ref{fig:Big Pic}, this paper presents an optimization-based algorithm to design gaits for legged robotic systems while satisfying each of these requirements.

\begin{figure}[t]
\centering  
  {\includegraphics[width=1.1\columnwidth]{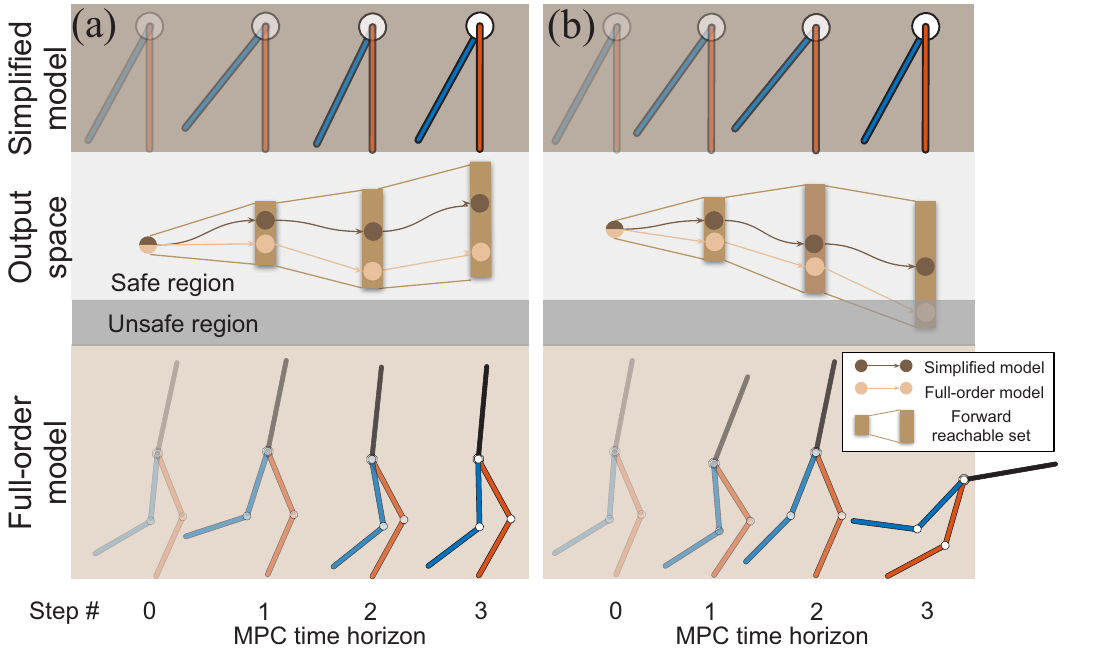}}
\caption{This paper proposes a method to design gaits that are certified to be tracked by a full-order robot model (bottom row sub-figures) for $N$-steps without falling over.
To construct this method, this paper defines a set of outputs that are functions of the state of the robot and a chosen gait (middle row sub-figures).
If the outputs associated with a particular gait satisfy a set of inequality constraints (depicted as the safe region drawn in light gray in the middle row sub-figures), then the gait is proven to be safely tracked by the legged system without falling.
Due to the high-dimensionality of the robot's dynamics, forward propagating these outputs via the robot's dynamics for $N$-steps to design a gait that is certified to be tracked safely is intractable.
To address this challenge, this paper constructs a template model (top row sub-figures) whose outputs are sufficient to predict the behavior of the anchor's outputs. 
In particular, if all of the points in a bounded neighborhood of the forward reachable set of the outputs of the template model remain within the safe region, then the anchor is certified to behave safely.
This paper illustrates how this can be incorporated into a MPC framework to design safe gaits in real-time.}
\label{fig:Big Pic}
\end{figure}

We begin by summarizing related work with an emphasis on techniques that are able to make guarantees on the safety of the designed controller.
For instance, the Zero-Moment Point approach \cite{vukobratovic1972stability} characterizes the stability of a legged robot with planar feet by defining the notion of the Zero-Moment Point and requiring that it remains within the robot's base of support. 
Though this requirement can be used to design a controller that can avoid falling at run-time, the gaits designed by the ZMP approach are static and energetically expensive \cite[Section~10.8]{kuo2007choosing,westervelt2007feedback}. 

In contrast, the Hybrid Zero Dynamics approach, which relies upon feedback linearization to drive the actuated degrees of freedom of a robot towards a lower dimensional manifold, is able to synthesize a controller which generates gaits that are more dynamic. 
Though this approach can generate safety preserving controllers for legged systems in real-time in the presence of model uncertainty \cite{ames2014rapidly,hsu2015control,nguyen2015optimal,nguyen2016exponential,nguyen2016dynamic}, it is only able to prove that the gait associated with a synthesized control is locally stable. 
As a result, it is non-trivial to switch between multiple constructed controllers while preserving any safety guarantee. 
Recent work has extended the ability of the hybrid zero dynamic approach beyond a single neighborhood of any synthesized gait \cite{motahar2016composing,veer2018safe,ames2017first,smit2019walking}.
These extensions either assume full-actuation \cite{ames2017first} or ignore the behavior of the legged system off the lower dimensional manifold \cite{motahar2016composing,veer2018safe,smit2019walking}. 

Rather than designing controllers for legged systems, other techniques have focused on characterizing the limits of safe performance by using Sums-of-Squares (SOS) optimization \cite{parrilo2000structured}.
These approaches use semi-definite programming to identify the limits of safety in the state space of a system as well as associated controllers for hybrid systems \cite{prajna,shia2014convex}.
These safe sets can take the form of \emph{reachable sets} \cite{koolen2016balance,shia2014convex} or \emph{invariant sets} in state space \cite{wieber2002stability,prajna,posa2017balancing}.
However, the representation of each of these sets in state space restricts the size of the problem that can be tackled by these approaches and as a result, these SOS-based approaches have been primarily applied to reduced models of walking robots: ranging from spring mass models \cite{zhao2017optimal}, to inverted pendulum models \cite{koolen2016balance,tang2017invariant} and  to inverted pendulum models with an offset torso mass \cite{posa2017balancing}.
Unfortunately the differences between these simple models and real robots makes it challenging to extend the safety guarantees to more realistic real-world models.

This paper addresses the shortcomings of prior work by making the following four contributions.
First, in Section \ref{subsec:outputsrabbit}, we describe a set of outputs that are functions of the state of the robot, which can be used to determine whether a particular gait can be safely tracked by a legged system without falling.
In particular, if a particular gait's outputs satisfy a set of inequality constraints that we define, then we show that the gait can be safely tracked by the legged system without falling.
To design gaits over $N$-steps that do not fall over, one could begin by forward propagating these outputs via the robot's dynamics for $N$-steps.
Unfortunately performing this computation can be intractable due to the high-dimensionality of the robot's dynamics. 
To address this challenge, our second contribution, in Section \ref{subsec:outputssimple}, leverages the anchor and template framework to construct a simple model (template) whose outputs are sufficient to predict the behavior of the full model's (anchor's) outputs \cite{full1999templates}. 
Third, in Section \ref{subsec: FRS}, we develop an offline method to compute a gait parameterized forward reachable set that describes the evolution of the outputs of the simple model. 

Similar to recently developed work on motion planning for ground and aerial vehicles \cite{majumdar2017funnel,herbert2017fastrack,kousik2018bridging,kousik2019safe}, one can then require that all possible outputs in the forward reachable set satisfy the set of inequality constraints that we define that guarantee that the robot does not fall over during the $N$-steps. 
Unfortunately this type of set inclusion constraint can be challenging to enforce at run-time. 
Finally, in Sections \ref{subsec: set inclusion} and \ref{sec:mpc}, we describe how to incorporate this set-inclusion constraint as a set of inequality constraints in a Model Predictive Control (MPC) framework that are sufficient to ensure $N$-step walking that does not fall over.
Note, to simplify exposition, this paper focuses on an example implementation on a $14$-dimensional model of the robot RABBIT that is described in Section \ref{sec:prelim}. 
The remainder of this paper is organized as follows. 
Section \ref{sec:results} demonstrates the performance of the proposed approach on a walking example and Section \ref{sec:conclusion} concludes the paper.

\section{Preliminaries}
\label{sec:prelim}
This section introduces the notation, the dynamic model of the RABBIT robot, and a Simplified Biped Model (SBM) that is used throughout the remainder of this paper.
The following notation is adopted in this manuscript.
All sets are denoted using calligraphic capital letters.
Let $\R$ denote the set of real numbers, and let $\N_+$ denote the collection of all non-negative integers.
Give a set $\mathcal A \subset \R^n$ for some $n \in \N_+$, let $C^1(\mathcal A)$ denote the set of all differentiable continuous functions from $\mathcal A$ to $\R$ whose derivative is continuous and let $\lambda_\mathcal A$ denote the Lebesgue measure which is supported on $\mathcal A$.

\subsection{RABBIT Model (Anchor)}
\label{subsec:high_model}

This paper considers the walking motion of a planar 5-link model of RABBIT \cite{chevallereau2003rabbit}.
The walking motion of the RABBIT model consists of alternating phases of \emph{single stance} (one leg in contact with the ground) and \emph{double stance} (both legs in contact with the ground).
While in single stance, the leg in contact with the ground is called the \emph{stance leg}, and the non-stance leg is called the \emph{swing leg}.
The double stance phase is instantaneous.
The configuration of the robot at time $t$ is $q(t) := [q_h(t), q_v(t), q_1(t), q_2(t), q_3(t), q_4(t), q_5(t)]^\top \in \Q \subset \R^7$,
where $(q_h(t),q_v(t))$ are Cartesian position of the robot hip;
$q_1(t)$ is the torso angle relative to the upright direction;
$q_2(t)$ and $q_4(t)$ are the hip angles relative to stance and swing leg, respectively; and
$q_3(t)$ and $q_5(t)$ are the knee angles.
The joints $(q_2, q_3, q_4, q_5)$ are actuated, and $q_1$ is an underactuated degree of freedom.
Let $\theta (q) := q_1 + q_2 + q_3/2$ denote the \emph{stance leg angle}, and let $\phi(q) := q_1 + q_4 + q_5/2$ denote the \emph{swing leg angle}.
We refer to the configuration when the robot hip is right above the stance foot, i.e. $\theta=\pi$, as \emph{mid-stance}. 
We refer to the motion between the $i$-th and $(i+1)$-st swing leg foot touch down with the ground as the \emph{$i$-th step}.


Using the method of Lagrange, we can obtain a continuous dynamic model of the robot during swing phase:
\begin{equation}
\label{eq:dyn}
\dot{a}(t) = f( a(t), u(t) )
\end{equation}
where $a(t) = [q^\top(t),\dot{q}^\top(t)]^\top \in T\Q \subset \R^{14}$ denotes the tangent bundle of $\Q$, $u(t) \in U$, $U$ describes the permitted inputs to the system, and $t$ denotes time.
We model the RABBIT as a hybrid system and describe the instantaneous change using the notation of a \emph{guard} and a \emph{reset map}.
That is, suppose $(\theta(q(t)), c_\text{foot}(q(t)))$ denotes the stance leg angle and the vertical position of the swing foot relative to the stance foot, respectively, given a configuration $q(t)$ at time $t$.
The guard $\G$ is $\{ (b,b') \in T\Q \mid \pi/2 < \theta(b) < 3\pi/2, c_\text{foot}(b) = 0 \text{ and } \Dot{c}_\text{foot}(b,b') < 0 \}$. 
Notice the force of the ground contact imposes a holonomic constraint on stance foot position, which enables one to obtain a reset map: \cite[Section~3.4.2]{westervelt2007feedback}:
\begin{equation}
\label{eq:reset}
    \Dot{q}^+(t) = \Delta\big(\Dot{q}^-(t)\big),
\end{equation}
where $\Delta$ describes the relationship between the pre-impact and post impact velocities.
More details about the definition and derivation of this hybrid model can be found in \cite[Section~3.4]{westervelt2007feedback}.

To simplify exposition, this paper at run-time optimizes over a family of reference gaits that are characterized by their average velocity and step length. 
These reference gaits are described by a vector of \emph{control parameters} $P(i) =  \big(p_1(i), p_2(i)\big)\in\mathcal P$ for all $i \in \N$, where $p_1(i)$ denotes the average horizontal velocity and $p_2(i)$ denotes the step length between the $i$-th and $(i+1)$-st mid-stance position.
Note $\mathcal P$ is compact.
These reference gaits are generated by solving a finite family of nonlinear optimization problems using FROST in which we incorporate $p_1(i)$, $p_2(i),$ and periodicity as constraints, and minimize the average torque squared over the gait period  \cite{hereid2017frost}. 
Each of these problems yields a reference trajectory parameterized by $P(i)$ and interpolation is applied over these generated gaits to generate a continuum of gaits.
Given a control parameter, a control input into the RABBIT model is generated by tracking the corresponding reference trajectories using a classical PD controller.

Next, we define a solution to the hybrid model as a pair $(\I, a)$, where $\I = \{I_i\}_{i=0}^N$ is a \emph{hybrid time set} with $I_i$ being intervals in $\R$,  and $a = \{a_i(\cdot)\}_{i=0}^N$ is a finite sequence of functions with each element $a_i(\cdot): I_i \to T\Q$ satisfying the dynamics \eqref{eq:dyn} over $I_i$  where $N \in \N$ \cite[Definitions~3.3, 3.4, 3.5]{lygeros2012hybrid}. 
Denote each $I_i := [\tau_i^+, \tau_{i+1}^-]$ for all $i < N$.
$\tau_i$ corresponds to the time of the transition between $(i-1)$-th to $i$-th step.
We let $\tau_i^-$ correspond to the time just before the transition and and $\tau_i^+$ correspond to the time just after the transition.
Since transitions are assumed to be instantaneous, $\tau_i = \tau_i^- = \tau_i^+$ if all values exist.
When a transition never happens during the $i$-th step, we denote $\tau_{i-1}^- = +\infty$.
Note when $\tau_{i+1} < \infty$, $a_i(\tau_{i+1}^-) \in \G$ and $a_{i+1}(\tau_{i+1}^+) \in \Delta(a_i(\tau_{i+1}^-))$. 


\subsection{Simplified Biped Model (Template)}


As we show in Section \ref{sec:results}, performing online optimization with the full RABBIT model is intractable due to the size of its state space. 
In contrast, performing online optimization with the \emph{Simplified Biped Model} (\HIP{}) adopted from \cite{wight2008introduction} is tractable. 
This model consists of a point-mass $M$ and two mass-less legs each with a constant length $l$.
The configuration of the \HIP{} at time $t$ is described by the stance leg angle, $\hat{\theta}$, and the swing leg angle, $\hat{\phi}$.
The input into the model is the step length size and the guard is the set of configurations when $\hat{\theta} + \hat{\phi} = 2 \pi$.
The swing leg swings immediately to a specified step length.
During the swing phase, one can use the method of Lagrange to describe the evolution of the configuration as a function of the current configuration and the input. 
Subsequent to the instantaneous double stance phase, an impact with the ground happens with a coefficient of restitution of $0$.
We denote a hybrid execution of the \HIP{} as a pair $(\hat{\mathcal I}, \hat{a})$ where $\hat{\mathcal I} = \{\hat{I}_i\}_{i=0}^N$ is a hybrid time set with $\hat I_i := [\hat\tau_i^+, \hat\tau_{i+1}^-]$ and $\Hat{a} = \{\Hat{a}_i(\cdot)\}_{i=0}^N$ is a finite sequence of solutions to the \HIP{}'s equations of motion.

 \section{Outputs to Describe Successful Walking}
\label{sec:outputs}

During online optimization, we want to optimize over the space of parameterized inputs while introducing a constraint to guarantee that the robot does not fall over.
This section first formalizes what it means for the RABBIT model to walk successfully without falling over. 
Unfortunately due to the high-dimensionality of the RABBIT model, implementing this definition directly as a constraint during online optimization is intractable. 
To address this problem, in Section \ref{subsec:outputsrabbit} defines a set of outputs that are functions of the state of RABBIT and proves that the value of these outputs can determine whether RABBIT is able to walk successfully.
Subsequently in Section \ref{subsec:outputssimple} we define a corresponding set of outputs that are functions of the state of the \HIP{} and illustrate how their values can be used to determine whether RABBIT is able to walk successfully. 

To define successful walking on RABBIT, we begin by defining the time during step $i$ at which mid-stance occurs (i.e., the largest time $t$ at which $\theta(q(t)) = \pi$ during $I_i$) as
\begin{equation}
\label{eq:t_MS}
    t_i^{MS} := 
    \begin{cases}
        +\infty, & \hspace*{-2cm} \text{if } \theta(q(t)) < \pi \quad \forall t \in I_i, \\
        -\infty, & \hspace*{-2cm} \text{if } \theta(q(t)) > \pi \quad \forall t \in I_i, \\
        \max\{t \in I_i \mid \theta(q(t)) = \pi  \}, & \text{otherwise.}
    \end{cases}
\end{equation} 
Note if mid-stance is never reached during step $i$, then the mid-stance time is defined as plus or minus infinity depending upon if the hip-angle remains less than $\pi$ or greater than $\pi$ during step $i$, respectively. 
Using this definition, we formally define successful walking for the RABBIT model  as:
\begin{defn}
\label{def: success walking}
The RABBIT model \emph{walks successfully} in step $i \in \N$ if
\begin{enumerate}
    \item $t_i^{MS} \neq \pm \infty$, \label{def:succ:1}
    \item $\pi/2 < \theta(q(t)) < 3 \pi /2$ for all $t \in I_i$, and \label{def:succ:2}
    \item $\tau_{i+1}^- < +\infty$. \label{def:succ:3}
\end{enumerate}
\end{defn}
To understand this definition, note that the first requirement ensures that mid-stance is reached, the second requirement ensures that the hip remains above the ground, and the final requirement ensures that the swing leg actually makes contact with the ground. 
Though satisfying this definition ensures that RABBIT takes a step, enforcing this condition directly during optimization can be cumbersome due to the high dimensionality of the RABBIT dynamics.

\subsection{Outputs to Describe Successful RABBIT Walking}
\label{subsec:outputsrabbit}

This subsection defines a set of discrete outputs that are functions of the state of RABBIT model and illustrates how they can be used to predict failure.
We begin by defining another time variable $t_i^0$:
\begin{equation}
\label{eq:t_0}
    t_i^0 := 
    \begin{cases}
        \tau_i^+, & \hspace*{-3cm} \text{if } \dot{\theta}(q(t), \dot{q}(t)) < 0 \quad \forall t \in I_i, \\
        \tau_{i+1}^-, & \hspace*{-3cm} \text{if } \dot{\theta}(q(t), \dot{q}(t)) > 0 \quad \forall t \in I_i, \\
        \max\{t \in I_i \mid \dot{\theta}(q(t), \dot{q}(t)) = 0 \}, &
            \text{otherwise.}
    \end{cases}
\end{equation}
Note $t_i^0$ is defined to be the last time in $I_i$ when a sign change of $\dot{\theta}$ occurs; when a sign change does not occur, $t_i^0$ is defined as an endpoint of $I_i$ associated with the sign of $\dot{\theta}$.


\begin{figure}
  \centering 
  \includegraphics[width=\columnwidth,clip=true]{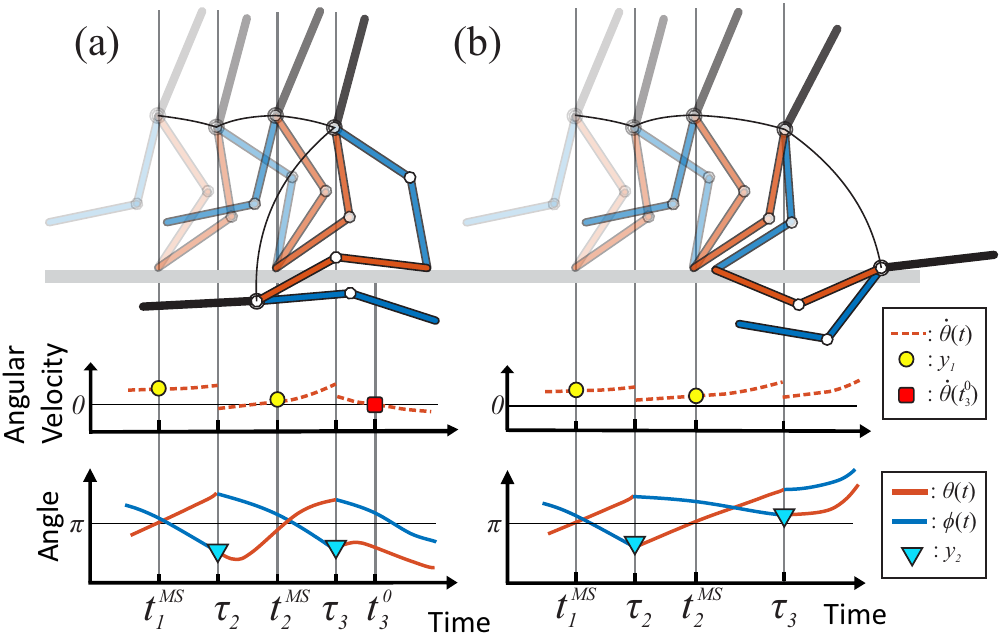}
  \caption{An illustration of how the values of the outputs can be used to determine whether the robot walks safely. 
  To ensure that the robot does not fall backwards, one can require that $y_1(i) \geq 0$ (left column).
  In particular if $y_1(i) < 0$, then $t_i^{MS} = +\infty$ which implies that the robot is falling backwards.
  To ensure that the robot does not fall forward, one can require that $y_2(i) \leq \pi$ (right column).}
  \label{fig:walk}
\end{figure}

We first define an output, $y_1: \N \to \R$ that can be used to ensure that $t_i^{MS} \neq +\infty$: 
\begin{equation}
    y_1(i) := 
    \begin{cases}
        \dot{\theta}(q(t_i^{MS}), \dot{q}(t_i^{MS})), &
        \text{if } t_i^{MS} \neq \pm \infty, \\
        -\sqrt{2 \mathrm{g} ( l_\text{st}(t_i^{0}) -q_v(t_i^{0}) )} / l_\text{st}(t_i^0), & \text{if } t_i^{MS} = +\infty, \\
        1 & \text{if } t_i^{MS} = -\infty,
    \end{cases}
    \label{def:y_1}
\end{equation}
where $\mathrm{g}$ is gravity and $l_\text{st}(t_i^{0})$ is the stance leg length at time $t_i^0$.
Note that $y_1(i)$ is the hip angular velocity when the mid-stance position is reached during the $i$-th step. 
When the mid-stance position is not reached, $-y_1(i)$ represents the additional hip angular velocity needed to reach the mid-stance position.
In particular, notice $t_i^{MS} \neq +\infty$ whenever $y_1(i) \geq 0$ .

Next, we define an output $y_2 : \N \to \R$ that can be used to ensure that $t_i^{MS} \neq -\infty$:
\begin{equation}
\label{def:y_2}
    y_2(i) := 
    \begin{cases}
        \phi(q(\tau_{i+1}^-)), & \text{ if } \tau_{i+1}^- < +\infty, \\
        2\pi, & \text{otherwise}.
    \end{cases}
\end{equation}
Note, $y_2(i)$ is the swing leg angle at touch-down at the end of the $i$-th step;
if touch-down does not occur, $y_2(i)$ is defined as $2\pi$.
Recall $\phi(q(\tau_{i+1}^-)) = \theta(q(\tau_{i+1}^+))$, so if $y_2(i) \leq \pi$, it then follows from \eqref{eq:t_MS} and \eqref{def:y_2} that $t_{i+1}^{MS} \neq -\infty$ and $\tau_{i+1}^-<+\infty$. 
Fig. \ref{fig:walk} illustrates the behavior of $y_1$ and $y_2$.

We now define our last two outputs $y_3,y_4:\N \to \R\cup\{-\infty,+\infty\}$ that can be used to ensure that the hip stays above the ground:
\begin{equation}
    y_3(i) :=\begin{cases}
    \inf \{\theta(q(t))\mid t\in [t_i^{MS}, t_{i+1}^{MS}]\}, & \text{if } t_{i+1}^{MS}, t_{i}^{MS}\in\R,\\
    -\infty, & \text{otherwise.}
    \end{cases}
\end{equation}
\begin{equation}
    y_4(i) :=\begin{cases}
    \sup \{\theta(q(t))\mid t\in [t_i^{MS}, t_{i+1}^{MS}]\}, & \text{if } t_{i+1}^{MS}, t_{i}^{MS}\in\R,\\
    +\infty, & \text{otherwise.}
    \end{cases}
\end{equation}
Finally, we let $\Y := \R \times \R \times \left(\R\cup\{-\infty,+\infty\}\right) \times \left(\R\cup\{-\infty,+\infty\}\right)$.
Using these definitions, we can prove the following theorem that constructs a sufficient condition to ensure successful walking by RABBIT.

\begin{thm}
\label{thm: walking successfully}
Suppose that the $0$-th step can be successfully completed (i.e. $\tau_0^+$ and $t_0^{MS}$ are finite, $\inf \{\theta(q(t))\mid t\in [\tau_0^+, t_0^{MS}]\}>\pi/2$, and $\sup \{\theta(q(t))\mid t\in [\tau_0^+, t_0^{MS}]\}<3\pi/2$)). 
Suppose $y_1(i)\geq 0,\, y_2(i)\leq\pi$, $y_3(i)>\pi/2$ and $y_4(i)<3\pi/2$ for each $i \in \{0,\cdots,N\}$, then the robot walks successfully at the $i$-th step for each $i \in \{0,\cdots,N\}$.
\end{thm}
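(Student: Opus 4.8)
The plan is to argue by induction on the step index $i$, taking as the inductive claim the three conditions of Definition~\ref{def: success walking} at step $i$. Carrying the third condition $\tau_{i+1}^-<+\infty$ along the induction is what makes it go through: it guarantees that the guard is reached and the reset map~\eqref{eq:reset} is applied at $\tau_{i+1}=\tau_{i+1}^-=\tau_{i+1}^+$, so that step $i+1$ is entered and the outputs at step $i+1$ are well defined. Before the induction I would record how each condition at step $i$ reduces to an inequality on the outputs. Condition~\ref{def:succ:3} follows from $y_2(i)\le\pi$: since $y_2(i)=2\pi$ whenever touch-down is missed, $y_2(i)\le\pi$ forces $\tau_{i+1}^-<+\infty$, as noted after~\eqref{def:y_2}. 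Condition~\ref{def:succ:1} follows because $y_1(i)\ge0$ excludes $t_i^{MS}=+\infty$ (the remark after~\eqref{def:y_1}), while $t_i^{MS}=-\infty$ is excluded for $i=0$ by the standing hypothesis and for $i\ge1$ by $y_2(i-1)\le\pi$ together with $\phi(q(\tau_i^-))=\theta(q(\tau_i^+))$ and~\eqref{eq:t_MS}. Thus only condition~\ref{def:succ:2}, the bound $\pi/2<\theta(q(t))<3\pi/2$ on all of $I_i$, needs real work, and this is where $y_3$ and $y_4$ enter.

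For the base case $i=0$: condition~\ref{def:succ:1} is the standing hypothesis that $t_0^{MS}$ is finite, and condition~\ref{def:succ:3} follows from $y_2(0)\le\pi$ by the reduction above. For condition~\ref{def:succ:2} I would split $I_0=[\tau_0^+,\tau_1^-]=[\tau_0^+,t_0^{MS}]\cup[t_0^{MS},\tau_1^-]$, which is legitimate since $t_0^{MS}\in I_0$. On $[\tau_0^+,t_0^{MS}]$ the bound is exactly the standing hypothesis $\inf\theta>\pi/2$, $\sup\theta<3\pi/2$. On $[t_0^{MS},\tau_1^-]$ I would use that $y_2(0)\le\pi$ makes step $1$ be entered and, together with $y_1(1)\ge0$, makes $t_1^{MS}\in\R$, hence $t_1^{MS}\in I_1=[\tau_1^+,\tau_2^-]$ and $\tau_1^-=\tau_1^+\le t_1^{MS}$; therefore $[t_0^{MS},\tau_1^-]\subseteq[t_0^{MS},t_1^{MS}]$ and $\pi/2<y_3(0)\le\theta(q(t))\le y_4(0)<3\pi/2$ there.

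For the inductive step, fix $i\in\{0,\dots,N-1\}$ and suppose step $i$ walks successfully; then step $i+1$ is entered. Conditions~\ref{def:succ:1} and~\ref{def:succ:3} for step $i+1$ follow from $y_1(i+1)\ge0$, $y_2(i)\le\pi$ and $y_2(i+1)\le\pi$ exactly as in the reduction. For condition~\ref{def:succ:2} I would split $I_{i+1}=[\tau_{i+1}^+,t_{i+1}^{MS}]\cup[t_{i+1}^{MS},\tau_{i+2}^-]$; since $t_i^{MS}\in I_i$ gives $t_i^{MS}\le\tau_{i+1}^-=\tau_{i+1}^+$, the first piece lies in $[t_i^{MS},t_{i+1}^{MS}]$, on which $y_3(i)>\pi/2$ and $y_4(i)<3\pi/2$ dominate $\theta$; and since $t_{i+2}^{MS}\in I_{i+2}$ gives $\tau_{i+2}^-=\tau_{i+2}^+\le t_{i+2}^{MS}$, the second piece lies in $[t_{i+1}^{MS},t_{i+2}^{MS}]$, on which $y_3(i+1)$ and $y_4(i+1)$ do. Hence $\theta(q(t))\in(\pi/2,3\pi/2)$ on all of $I_{i+1}$, which closes the induction.

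The step I expect to be the main obstacle is the bookkeeping around the transition instants. One has to justify the chain $t_{i-1}^{MS}\le\tau_i^-=\tau_i^+\le t_i^{MS}\le\tau_{i+1}^-$ so that the two half-open pieces of each step interval genuinely nest inside consecutive mid-stance-to-mid-stance windows, and one has to read $\theta(q(\cdot))$ inside the infima and suprema defining $y_3,y_4$ along the hybrid execution so that it also controls $\theta$ just before and just after each reset, where the stance/swing relabeling makes $\theta$ jump (recall $\phi(q(\tau_i^-))=\theta(q(\tau_i^+))$). A related point of care is that the degenerate branches of $t_i^{MS}$, $t_i^0$, $y_1$, $y_2$, $y_3$ and $y_4$ must be mutually consistent; in particular one needs that $y_3(N)>\pi/2$ and $y_4(N)<3\pi/2$ are not vacuously satisfiable but in fact force $t_{N+1}^{MS}\in\R$, as otherwise the nesting argument for $I_N$ would fail at the horizon.
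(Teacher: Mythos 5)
Your proposal is correct and follows essentially the same route as the paper's proof: an induction over steps in which $y_1(i)\geq 0$ rules out $t_i^{MS}=+\infty$, $y_2(i)\leq\pi$ rules out $t_{i+1}^{MS}=-\infty$ and forces $\tau_{i+1}^-<+\infty$, and $y_3,y_4$ supply the bound $\pi/2<\theta(q(t))<3\pi/2$. The only difference is that you spell out what the paper compresses into ``by using the definitions of $y_3$ and $y_4$'' --- namely the decomposition of each $I_i$ into the two half-windows $[\tau_i^+,t_i^{MS}]$ and $[t_i^{MS},\tau_{i+1}^-]$ nested in consecutive mid-stance intervals, and the observation that $y_3(N)>\pi/2$ itself forces $t_{N+1}^{MS}\in\R$ at the horizon --- which are exactly the details needed to make the paper's sketch rigorous.
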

\begin{proof}
\noindent  Notice $y_1(i) \geq 0 \Rightarrow t_i^{MS} \neq +\infty$ and $y_2(i) \leq \pi \Rightarrow  t_{i+1}^{MS} \neq -\infty$ for each $i \in \{1,\cdots,N\}$. 
By induction we have $t_i^{MS}$ is finite $\forall i \in \{1,\cdots,N\}$.
$y_2(i)\leq\pi<2\pi$ implies that $\tau_{i+1}^-<+\infty$.
By using the definitions of $y_3$ and $y_4$, one has that the robot walks successfully in the $i$-th step based on Definition \ref{def: success walking}.
\end{proof}

\subsection{Approximating Outputs Using the \HIP{}}
\label{subsec:outputssimple}

Finding an analytical expression describing the evolution of each of the outputs can be challenging. 
Instead we define corresponding outputs $\hat{y}(i):=\big(\hat{y}_1(i),\hat{y}_2(i),\hat{y}_3(i),\hat{y}_4(i)\big)\in\Y$ for \HIP{}.
Importantly, the dynamics of each of these corresponding outputs can be succinctly described.

As we did for the RABBIT model, consider the following set of definitions for the \HIP{}:
\begin{equation}
\label{eq:t_MS_low}
    \hat{t}_i^{MS} := 
    \begin{cases}
        +\infty, & \hspace*{-2cm} \text{if } \hat{\theta}(t) < \pi \quad \forall t \in\hat I_i, \\
        -\infty, & \hspace*{-2cm} \text{if } \hat{\theta}(t) > \pi \quad \forall t \in\hat I_i, \\
        \max\{t \in \hat{I}_i \mid \hat{\theta}(t) = \pi  \}, & \text{otherwise.}
    \end{cases}
\end{equation}

\begin{equation}
\label{eq:t_0_low}
    \hat{t}_i^0 := 
    \begin{cases}
        \hat{\tau}_i^+, & \hspace*{-2cm} \text{if } \dot{\hat{\theta}}(t) < 0 \quad \forall t \in \hat I_i, \\
        \hat{\tau}_{i+1}^-, & \hspace*{-2cm} \text{if } \dot{\hat{\theta}}(t) > 0 \quad \forall t \in \hat I_i, \\
        \max\{t \in \hat{I}_i \mid \dot{\hat{\theta}}(t) = 0 \}, &
            \text{otherwise.}
    \end{cases}
\end{equation}

\begin{equation}
    \hat{y}_1(i) := 
    \begin{cases}
       \dot{\hat{\theta}}(\hat{t}_i^{MS}), &
        \text{if } \hat{t}_i^{MS} \neq \pm \infty \\
        -\sqrt{2 \mathrm{g} ( l(1+\cos(\hat{\theta}(\hat{t}_i^0))) )} / l, & \text{if } \hat{t}_i^{MS} = +\infty \\
        1 & \text{if } \hat{t}_i^{MS} = -\infty,
    \end{cases}
\end{equation}

\begin{equation}
    \hat{y}_2(i) := 
    \begin{cases}
        \hat \phi(\hat{\tau}_{i+1}^-), & \text{if } \hat{\tau}_{i+1}^- < +\infty, \\
        2\pi, & \text{otherwise}.
    \end{cases}
\end{equation}

\begin{equation}
    \hat y_3(i) :=\begin{cases}
    \inf \{\hat \theta(t)\mid t\in [\hat t_i^{MS}, \hat t_{i+1}^{MS}]\}, & \text{if } \hat t_{i+1}^{MS}, \hat t_{i}^{MS}\in\R,\\
    -\infty, & \text{otherwise.}
    \end{cases}
\end{equation}
\begin{equation}
    \hat y_4(i) :=\begin{cases}
    \sup \{\hat \theta(t)\mid t\in [\hat t_i^{MS}, \hat t_{i+1}^{MS}]\}, & \text{if } \hat t_{i+1}^{MS}, \hat t_{i}^{MS}\in\R,\\
    +\infty, & \text{otherwise.}
    \end{cases}
\end{equation}
The discrete-time dynamics of each of these outputs of \HIP{} can be described by the following difference equations:
\begin{equation}
\begin{aligned}
    \hat{y}_1(i+1) &= f_{\hat{y}_1}\big( \hat{y}_1(i), P(i) \big) \\
    \hat{y}_2(i) &= f_{\hat{y}_2}\big( P(i) \big) \\
    \hat{y}_3(i) &= f_{\hat{y}_3}\big( \hat{y}_1(i), P(i) \big) \\
    \hat{y}_4(i) &= f_{\hat{y}_4}\big( \hat{y}_1(i), P(i) \big)
    \end{aligned}
    \label{eq:discrete-z} 
\end{equation}
for each $i \in \N$, $\hat{y}(i) \in \Y$, and $P(i) \in \PP$.
Such functions $f_{\hat{y}_1}$, $f_{\hat{y}_2}$,  $f_{\hat{y}_3}$ and $f_{\hat{y}_4}$ can be generated using elementary mechanics \footnote{A derivation can be found at: \url{https://github.com/pczhao/TA_GaitDesign/blob/master/SBM_dynamics.pdf}}.

To describe the gap between the discrete signals $y$ and $\hat{y}$ we make the following assumption:
\begin{assum}
\label{assum: modeling error bound}
    For any sequence of control parameters, $\{P(i)\}_{i \in N}$, and corresponding sequences of outputs, $\{y_1(i),y_2(i),y_3(i),y_4(i)\}_{i \in \N}$ and $\{\hat{y}_1(i),\hat{y}_2(i),\hat{y}_3(i),\hat{y}_4(i)\}_{i \in \N}$, generated by the RABBIT dynamics and \eqref{eq:discrete-z}, respectively, there  exists \emph{bounding functions} $\underline{B}_1$, $\overline{B}_1: \R   \times \PP \to \R$, $\overline{B}_2: \PP \times \R  \times \PP \to \R$, and $\underline{B}_3$, $\overline{B}_4: \R \times \PP \to \R$ satisfying
    \begin{gather}
        \underline{B}_1\big(y_1(i), P(i)\big) \leq  y_1(i+1) - \hat{y}_1(i+1)  \leq \overline{B}_1\big(y_1(i), P(i)\big) \label{eq:B1}\\
        y_2(i) - \hat{y}_2(i)\leq \overline{B}_2\big(P(i-1),y_1(i),P(i)\big)  \label{eq:B2}\\
        y_3(i) - \hat{y}_3(i)  \geq \underline{B}_3\big(y_1(i), P(i)\big) \label{eq:B3}\\
        y_4(i) - \hat{y}_4(i)  \leq \overline{B}_4\big(y_1(i), P(i)\big) \label{eq:B4}.
    \end{gather}
\end{assum}
In other words, if $y_1(i) = \hat{y}_1(i)$, then $\overline{B}_1$, $\underline{B}_1$, $\underline{B}_2$, $\underline{B}_3$, and $\overline{B}_4$ bound the maximum possible difference between $(y_1(i+1), y_2(i), y_3(i),y_4(i))$ and $(\hat{y}_1(i+1),\hat{y}_2(i), \hat{y}_3(i),\hat{y}_4(i))$.
Though we do not describe how to construct these bounding functions in this paper due to space limitations, one could apply SOS optimization to generate them \cite{smith2019}.
To simplify further exposition, we define the following:
\begin{equation}
\begin{split}
    \B(y_1(i),P(i)) :=
    [f_{\hat{y}_1}\big(y_1(i), P(i)\big) + \underline{B}_1\big(y_1(i), P(i)\big), \\
    f_{\hat{y}_1}\big(y_1(i), P(i)\big) + \overline{B}_1\big(y_1(i), P(i)\big)]
\end{split}
\end{equation}
for all $(y_1(i), P(i)) \in \R \times \PP$.
In particular, it follows from \eqref{eq:B1} that for any sequence of control parameters, $\{P(i)\}_{i \in N}$, and corresponding sequences of outputs, $\{y_1(i)\}_{i \in \N}$ generated by the RABBIT dynamics that $y_1(i+1) \in \B\big(y_1(i), P(i)\big)$ for all $i \in \N$.

\section{Enforcing N-Step Safe Walking}
\label{sec:problem}

This section proposes an online MPC framework to design a controller for the RABBIT model that can ensure successful walking for $N$-steps. 
In fact, when $N = 1$ one can directly apply Theorem \ref{thm: walking successfully} and Assumption \ref{assum: modeling error bound} to generate the following inequality constraints over $y_1(i)$, $P(i-1)$ and $P(i)$ to guarantee walking successfully from the $i$-th to the $(i+1)$-th mid-stance:
\begin{gather}
    f_{\hat y_1}\big(y_1(i), P(i)\big)
    +\underline{B}_1\big(y_1(i),P(i)\big)\geq 0, \label{cond: fall_back}\\
    f_{\hat y_2}\big( P(i) \big)
    +\overline{B}_2\big(P(i-1),y_1(i),P(i)\big)\leq\pi, \label{cond: fall_forward}\\
    f_{\hat y_3}\big(y_1(i), P(i)\big)
    +\underline{B}_3\big(y_1(i),P(i)\big)>\pi/2, \label{cond: hip_lower}\\
    f_{\hat y_4}\big(y_1(i), P(i)\big)
    +\overline{B}_4\big(y_1(i),P(i)\big)<3\pi/2. \label{cond: hip_upper}
\end{gather}

Unfortunately, to construct a similar set of constraints when $N > 1$, one has to either compute $(y_1(i),y_2(i),y_3(i),y_4(i))$ for each $i \leq N$, which can be computationally taxing, or one can apply \eqref{eq:B1} recursively to generate an outer approximation to $y_1(i)$ for each $i \leq N$ and then apply the remainder of Assumption \ref{assum: modeling error bound} to generate an outer approximation to $y_2(i), y_3(i),$ and $y_4(i)$ for each $i \leq N$.
In the latter instance, one would need the entire set of possible values for the outputs to satisfy the bounds described in \eqref{cond: fall_back}, \eqref{cond: fall_forward}, \eqref{cond: hip_lower}, and \eqref{cond: hip_upper} to ensure $N$-step safe walking.
This requires introducing a set inclusion constraint that can be cumbersome to enforce at run-time. 
To address these challenges, Section \ref{subsec: FRS} describes how to compute in an offline fashion, an $N$-step \emph{Forward Reachable Set} (FRS) that captures all possible outcomes for the outputs from a given initial state and set of control parameters for up to $N$ steps.
Subsequently, Section \ref{subsec: set inclusion} illustrates how to impose the set inclusion constraints as inequality constraints.

\subsection{Forward Reachable Set}
\label{subsec: FRS}
Letting $\Y_1 \subset \R$ be compact,
we define the \emph{$N$-step FRS of the output}:
\begin{defn}
\label{def:FRS}
    The $N$-step FRS of the output beginning from $\big(y_1(i), P(i)\big) \in \Y_1 \times \PP$ for $i \in \N$ and for $N \in \N$ is defined as
\begin{align}
    & {\W{}}_N\big(y_1(i), P(i)\big) := \bigcup_{n = i+1}^{i+N} \Big\{ y_1(n) \in  \Y_1 \mid \exists P(i+1), \ldots,\nonumber\\
    & \hspace{1.7cm} P(n-1) \in \PP \text{ such that } \forall j \in \{ i, \ldots, i + n - 1 \}, \nonumber\\
    & \hspace{1.7cm} \text{$y_1(j+1)$ is generated by the RABBIT} \nonumber \\
    & \hspace{1.7cm} \text{dynamics from $y_1(j)$ under $P(j)$ } \Big\} \label{eq:FRS}
\end{align}
\end{defn} 
In other words, given a fixed output $y_1(i)$ and the current control parameter $P(i)$, the FRS $\W_N$ captures all the outputs $y_1(j)$ that can be reached within $N$ steps, provided that all subsequent control parameters are contained in a set $\PP$.
The following result follows from the previous definition:
\begin{lem}
\label{lem: FRS inclusion}
\begin{equation}
        \W{}_M\big(y_1(i), P(i)\big) \subseteq \W{}_N\big(y_1(i), P(i)\big) \quad \forall 1 \leq M \leq N
        \label{FRS set inclusion}
    \end{equation}
\end{lem}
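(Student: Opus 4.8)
The plan is to prove the set inclusion directly from Definition \ref{def:FRS} by observing that the union defining $\W_N$ contains, term-by-term, every set appearing in the union defining $\W_M$. Concretely, fix $\big(y_1(i), P(i)\big) \in \Y_1 \times \PP$ and integers $1 \le M \le N$. The set $\W_M\big(y_1(i), P(i)\big)$ is a union over $n$ ranging from $i+1$ to $i+M$, while $\W_N\big(y_1(i), P(i)\big)$ is a union over $n$ ranging from $i+1$ to $i+N$. Since $M \le N$, the index range $\{i+1, \ldots, i+M\}$ is a subset of $\{i+1, \ldots, i+N\}$, so it suffices to show that for each fixed $n$ the $n$-th set in the first union is contained in the $n$-th set in the second union — but in fact the $n$-th set in each union is defined by exactly the same condition (existence of $P(i+1), \ldots, P(n-1) \in \PP$ such that the $y_1$-iterates are generated by the RABBIT dynamics), so these sets are literally equal. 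A union over a smaller index set of the same sets is a subset of the union over the larger index set, which gives \eqref{FRS set inclusion}.

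In writing this up I would first restate what membership in $\W_M\big(y_1(i), P(i)\big)$ means: $y_1^\star \in \W_M\big(y_1(i), P(i)\big)$ iff there is some $n \in \{i+1, \ldots, i+M\}$ and control parameters $P(i+1), \ldots, P(n-1) \in \PP$ for which $y_1^\star = y_1(n)$ with each $y_1(j+1)$ generated by the RABBIT dynamics from $y_1(j)$ under $P(j)$ for $j \in \{i, \ldots, n-1\}$. Then I would note that the very same $n$ lies in $\{i+1, \ldots, i+N\}$ because $n \le i+M \le i+N$, and the very same choice of control parameters witnesses membership of $y_1^\star$ in $\W_N\big(y_1(i), P(i)\big)$. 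Hence $y_1^\star \in \W_N\big(y_1(i), P(i)\big)$, and since $y_1^\star$ was arbitrary the inclusion follows.

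There is essentially no obstacle here: the result is a one-line consequence of the definition, and the only thing to be careful about is bookkeeping with the index ranges of the two unions and checking that the inner existential condition is independent of $N$ (it depends only on $n$, not on the upper limit of the union). One could optionally remark that the same argument shows monotonicity is strict only in degenerate cases, but that is not needed for the statement. I would keep the proof to two or three sentences.

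\begin{proof}
Fix $\big(y_1(i), P(i)\big) \in \Y_1 \times \PP$ and $1 \le M \le N$. By Definition \ref{def:FRS}, $\W_M\big(y_1(i), P(i)\big)$ is the union, over $n \in \{i+1, \ldots, i+M\}$, of the sets
\begin{align*}
  \S_n := \Big\{ y_1(n) \in \Y_1 \mid\ & \exists\, P(i+1), \ldots, P(n-1) \in \PP \text{ such that } \forall j \in \{i, \ldots, n-1\},\\
  & \text{$y_1(j+1)$ is generated by the RABBIT dynamics from $y_1(j)$ under $P(j)$} \Big\},
\end{align*}
and $\W_N\big(y_1(i), P(i)\big)$ is the union of the same sets $\S_n$ over the larger index range $n \in \{i+1, \ldots, i+N\}$; note that $\S_n$ depends only on $n$ and not on the upper limit of the union. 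Since $M \le N$, we have $\{i+1, \ldots, i+M\} \subseteq \{i+1, \ldots, i+N\}$, and therefore
\begin{equation*}
  \W_M\big(y_1(i), P(i)\big) = \bigcup_{n=i+1}^{i+M} \S_n \subseteq \bigcup_{n=i+1}^{i+N} \S_n = \W_N\big(y_1(i), P(i)\big),
\end{equation*}
which is \eqref{FRS set inclusion}.
\end{proof}
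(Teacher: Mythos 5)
Your proof is correct and matches the paper's reasoning exactly: the paper offers no explicit proof, simply noting that the lemma ``follows from the previous definition,'' and the intended argument is precisely your observation that each set in the union depends only on $n$ and not on the upper limit, so the union over $\{i+1,\ldots,i+M\}$ is contained in the union over $\{i+1,\ldots,i+N\}$. No gaps; the write-up is complete and appropriately brief.
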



To compute an outer approximation of the FRS, one can solve the following infinite-dimensional linear problem over the space of functions:
\begin{align*}
    \inf_{w_N, v_1, \cdots, v_N} & \int_{\Y_1 \times \PP \times \Y_1} w_N(x_1, x_2, x_3) \, d\lambda_{\Y_1 \times \PP \times \Y_1} ~~\FRS{}\\
    \text{s.t.}~~~~ & v_1(x_1, x_2, x_3) \geq 0,\\
    &\hspace{3cm}\forall x_3 \in \B(x_1, x_2)\\
    &\hspace{3cm}\forall(x_1, x_2) \in \Y_1 \times \PP \\
    & v_{\zeta+1}(x_1, x_2, x_4) \geq v_\zeta(x_1, x_2, x_3), \\
    &\hspace{3cm}\forall \zeta \in \{ 1, 2, \cdots, N-1\} \\
    &\hspace{3cm} \forall x_4 \in \B(x_3, x_5)\\
    &\hspace{3cm}\forall(x_1, x_2, x_5) \in \Y_1 \times \PP \times \PP\\
    & w_N(x_1, x_2, x_3) \geq 0,\\
    &\hspace{3cm}\forall (x_1, x_2, x_3) \in \Y_1 \times \PP \times \Y_1 \\
    & w_N(x_1, x_2, x_3) \geq v_\zeta(x_1, x_2, x_3) + 1,\\
    &\hspace{3cm}\forall \zeta = 1, 2, \cdots, N\\
    &\hspace{3cm}\forall (x_1, x_2, x_3) \in \Y_1 \times \PP \times \Y_1  
\end{align*}
where the sets $\Y_1$ and $\PP$ are given, and the infimum is taken over an $(N+1)$-tuple of continuous functions
$(w_N, v_1, \cdots, v_N) \in \left( C^1(\Y_1 \times \PP \times \Y_1; \R) \right)^{N+1}$.
Note that only the \HIP{}'s dynamics appear in this program via $\mathcal B(\cdot,\cdot)$.

Next, we prove that the FRS is contained in the 1-superlevel set of all feasible $w$'s in \FRS{}:
\begin{lem}
\label{lem:FRS}
Let $(w_N, v_1, \cdots, v_N)$ be feasible functions to \FRS{}, then for all $\big( y_1(i), P(i) \big) \in \Y_1 \times \PP$
\begin{equation}
    \W{}_N\big(y_1(i), P(i)\big) \subseteq \left\{x_3 \in \Y_1 \mid w_N\big(y_1(i),P(i),x_3\big) \geq 1 \right\}.
\end{equation}

\end{lem}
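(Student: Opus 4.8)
The plan is to show, for any feasible tuple $(w_N,v_1,\dots,v_N)$, that every point of $\W_N\big(y_1(i),P(i)\big)$ forces $w_N$ to be at least $1$ there, by a telescoping induction along the RABBIT-generated trajectory that reaches it, using the chain of constraints in \FRS{} to push nonnegativity from $v_1$ up through $v_\zeta$ and finally into $w_N$.

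First I would fix an arbitrary $\big(y_1(i),P(i)\big)\in\Y_1\times\PP$ and an arbitrary $y_1(n)\in\W_N\big(y_1(i),P(i)\big)$. By Definition \ref{def:FRS}, there is some $n\in\{i+1,\dots,i+N\}$ and control parameters $P(i+1),\dots,P(n-1)\in\PP$ so that each $y_1(j+1)$ is generated by the RABBIT dynamics from $y_1(j)$ under $P(j)$ for $j\in\{i,\dots,n-1\}$; invoking the remark following Assumption \ref{assum: modeling error bound}, this gives $y_1(j+1)\in\B\big(y_1(j),P(j)\big)$ for each such $j$. Writing $\zeta:=n-i\in\{1,\dots,N\}$, I would then prove by induction on $k\in\{1,\dots,\zeta\}$ that $v_k\big(y_1(i),P(i),y_1(i+k)\big)\ge 0$. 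The base case $k=1$ follows from the first constraint of \FRS{}, since $(y_1(i),P(i))\in\Y_1\times\PP$ and $y_1(i+1)\in\B(y_1(i),P(i))$. For the inductive step from $k$ to $k+1$ (with $k<\zeta$), I would apply the second family of constraints with $x_1=y_1(i)$, $x_2=P(i)$, $x_3=y_1(i+k)$, $x_5=P(i+k)\in\PP$, and $x_4=y_1(i+k+1)\in\B\big(y_1(i+k),P(i+k)\big)$, which yields $v_{k+1}\big(y_1(i),P(i),y_1(i+k+1)\big)\ge v_k\big(y_1(i),P(i),y_1(i+k)\big)\ge 0$. At $k=\zeta$ this gives $v_\zeta\big(y_1(i),P(i),y_1(n)\big)\ge 0$, and then the last family of constraints of \FRS{} gives $w_N\big(y_1(i),P(i),y_1(n)\big)\ge v_\zeta\big(y_1(i),P(i),y_1(n)\big)+1\ge 1$, so $y_1(n)$ lies in the $1$-superlevel set of $w_N\big(y_1(i),P(i),\cdot\big)$. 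Since $y_1(n)$ was arbitrary in $\W_N\big(y_1(i),P(i)\big)$, the inclusion follows.

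I do not expect a substantive analytic obstacle here; the argument is essentially a discrete telescoping induction. The one point that needs care is bookkeeping: one must check that the first two arguments of each $v_k$ stay pinned to $\big(y_1(i),P(i)\big)$ while only the third argument advances along the trajectory — which is exactly the shape of the constraints in \FRS{} — and that every control parameter appearing along the chosen trajectory lies in $\PP$ (true by Definition \ref{def:FRS}), so that the universal quantifiers in those constraints genuinely apply. A secondary subtlety worth flagging is that $\W_N$ is a union over the step count $n$, so the argument must be run for the specific $\zeta=n-i$ at which the given point is reached rather than for a fixed horizon.
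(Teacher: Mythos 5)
Your proposal is correct and follows essentially the same argument as the paper: fix a reachable point, use the first constraint of \FRS{} as the base case, chain the second family of constraints inductively along the trajectory, and conclude with the superlevel-set constraint on $w_N$. If anything, your bookkeeping is slightly more careful than the paper's, which nominally concludes $v_N(\cdot)\geq 0$ when the induction in fact only reaches $v_{n-i}$ --- a harmless slip since the final constraint holds for every index $\zeta$, exactly as you use it.
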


\begin{proof}
Let $(w_N, v_1, \cdots, v_N)$ be feasible functions to \FRS{}.
Substitute an arbitrary $y_1(i)\in\Y_1$ and $P(i)\in\PP$ into $x_1$ and $x_2$, respectively.
Suppose $\mu\in \W_N\big(y(i), P(i)\big)$, then there exists a natural number $n \in [i+1, i+N]$ and a sequence of control parameters $P(i+1), \cdots, P(n-1) \in \PP$, such that  $y_1(j+1) \in \B\big(y_1(j), P(j)\big)$ for all $i \leq j \leq n-1$ and $\mu = y_1(n)$.

We prove the result by induction.
Let $x_3=y_1(i+1) \in \B\big(y_1(i), P(i)\big)$.
It then follows from the first constraint of \FRS{} that $v_1\big(y_1(i), P(i), y_1(i+1)\big) \geq 0$.
Now, suppose $v_\zeta\big(y_1(i), P(i), y_1(i+\zeta)) \geq 0$ for some $1 < \zeta \leq n-i-1$.
In the second constraint of \FRS{}, let $x_3=y_1(i+\zeta)$, $x_4=y_1(i+\zeta+1) \in \B\big(y_1(i+\zeta), P(i+\zeta)\big)$, and $x_5=P(i+\zeta) \in \PP'$, then $v_{\zeta+1}\big(y_1(i), P(i), y_1(i+\zeta+1)\big) \geq 0$.
By induction, we know $v_N\big(y_1(i), P(i), y_1(n)\big) \geq 0$.
Using the fourth constraint of \FRS{}, let $x_3=\mu=y_1(n)$, and we get $w_N\big(y_1(i), P(i),\mu\big) \geq 1$.
Therefore $\mu\in \left\{x_3 \in \Y_1 \mid w_N\big(y_1(i),P(i),x_3\big) \geq 1 \right\}$.
\end{proof}

Though we do not describe it here due to space restrictions, a feasible polynomial solution to \FRS{} can be computed offline by 
making compact approximation of $\Y_1$ and
applying Sums-of-Squares programming \cite{zhao2017control,mohan2017synthesizing}.

\subsection{Set Inclusion}
\label{subsec: set inclusion}

To ensure safe walking through $N$-steps beginning at step $i$, we require several set inclusions to be satisfied during online optimization.
First, we require that $\W{}_{N}\big(y_1(i),P(i)\big)\subseteq [0,\infty)$, which ensures that \eqref{cond: fall_back} is satisfied. 
Since we cannot compute $\W{}_{N}\big(y_1(i),P(i)\big)$ exactly we instead can require that the $1$-superlevel set of $w_N$ is a subset of $[0,\infty)$; however, this set inclusion is difficult to enforce using MPC. 
Instead we utilize the following theorem which follows as a result of the $S$-procedure technique described in Section 2.6.3 of \cite{boyd1994linear} and Lemma \ref{lem:FRS}:
\begin{thm}
\label{thm: set inclusion 1}
Let $(w_N, v_1, \cdots, v_N)$ be feasible functions to \FRS{} and $\W{}_N$ be as in Definition \ref{def:FRS}. 
Let $s_1,\,s_2:\Y_1\times\PP\times\Y_1\rightarrow\R$ be functions that are non-negative everywhere.
Suppose $\ell:\Y_1\times\PP\rightarrow\R$ satisfies the following inequality
\begin{align}
    &s_{1}(x_1,x_2,x_3)\cdot x_3-\ell(x_1,x_2)+\nonumber\\
    -&s_{2}(x_1,x_2,x_3)\cdot \big( w_N(x_1,x_2,x_3)-1 \big)\geq0
\end{align}
for every $(x_1,x_2,x_3)\in\Y_1\times\PP\times\Y_1$.
Then for any $y \in \Y_1,$ and $P \in \PP$, if $\ell\big(y_1,P\big)\geq0$, then $\W{}_{N}\big(y_1,P\big)\subseteq [0,\infty)$.
\end{thm}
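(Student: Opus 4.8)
The plan is to prove the contrapositive-free implication directly by a chain of set inclusions, using the $S$-procedure inequality as the engine.  First I would fix arbitrary $y_1 \in \Y_1$ and $P \in \PP$ with $\ell(y_1,P) \geq 0$, and take an arbitrary point $\mu \in \W_N(y_1,P)$; the goal is to show $\mu \geq 0$.  By Lemma~\ref{lem:FRS}, since $(w_N,v_1,\dots,v_N)$ is feasible to \FRS{}, we have $\mu \in \{x_3 \in \Y_1 \mid w_N(y_1,P,x_3) \geq 1\}$, i.e. $w_N(y_1,P,\mu) \geq 1$, equivalently $w_N(y_1,P,\mu) - 1 \geq 0$.

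Next I would substitute $(x_1,x_2,x_3) = (y_1,P,\mu)$ into the hypothesized inequality on $\ell$.  This yields
\[
 s_1(y_1,P,\mu)\cdot\mu - \ell(y_1,P) - s_2(y_1,P,\mu)\cdot\big(w_N(y_1,P,\mu)-1\big) \geq 0.
\]
Now I would rearrange to isolate $s_1(y_1,P,\mu)\cdot\mu$ and bound the other terms: since $\ell(y_1,P)\geq 0$ by assumption, $s_2 \geq 0$ everywhere by hypothesis, and $w_N(y_1,P,\mu)-1 \geq 0$ from the previous paragraph, both $-\ell(y_1,P)$ and $-s_2(y_1,P,\mu)(w_N(y_1,P,\mu)-1)$ are nonpositive, so moving them to the right side gives $s_1(y_1,P,\mu)\cdot\mu \geq \ell(y_1,P) + s_2(y_1,P,\mu)(w_N(y_1,P,\mu)-1) \geq 0$.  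Since $s_1 \geq 0$ everywhere, we conclude $s_1(y_1,P,\mu)\cdot\mu \geq 0$; as $\mu$ was an arbitrary element of $\W_N(y_1,P)$ and the same holds for every choice of $P(i+1),\dots,P(n-1)$ and every $n \in [i+1,i+N]$ appearing in Definition~\ref{def:FRS}, this gives the desired containment once we argue $\mu \geq 0$.

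The one subtlety — and the step I expect to be the main obstacle — is passing from $s_1(y_1,P,\mu)\cdot\mu \geq 0$ to $\mu \geq 0$, which requires $s_1(y_1,P,\mu) > 0$, not merely $\geq 0$.  The clean way to handle this, and the standard reading of the $S$-procedure in this context (cf. Section~2.6.3 of \cite{boyd1994linear}), is to observe that the inequality we wrote is exactly the certificate that $x_3 \geq 0$ holds on the set $\{x_3 \in \Y_1 : w_N(x_1,x_2,x_3) \geq 1,\ \ell(x_1,x_2) \geq 0\}$: whenever $w_N - 1 \geq 0$ and $\ell \geq 0$ and $s_1, s_2 \geq 0$, the displayed inequality forces $s_1 \cdot x_3 \geq 0$, and because $s_1$ is a nonnegative weight one deduces $x_3 \geq 0$ (the degenerate case $s_1 = 0$ is handled by treating the multiplier pair as a genuine $S$-procedure certificate, or by noting it may be excluded without loss of generality).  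Combining this pointwise conclusion with Lemma~\ref{lem:FRS} gives $\W_N(y_1,P) \subseteq \{x_3 \in \Y_1 : x_3 \geq 0\} \subseteq [0,\infty)$, which completes the proof.
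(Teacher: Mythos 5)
Your proof follows exactly the route the paper intends (the paper gives no explicit proof, only the remark that the theorem ``follows as a result of the $S$-procedure \ldots and Lemma~\ref{lem:FRS}''): fix $(y_1,P)$ with $\ell(y_1,P)\geq 0$, use Lemma~\ref{lem:FRS} to get $w_N(y_1,P,\mu)\geq 1$ for every $\mu\in\W_N(y_1,P)$, substitute $(y_1,P,\mu)$ into the certificate inequality, and drop the nonpositive terms to obtain $s_1(y_1,P,\mu)\cdot\mu\geq 0$. Up to that point everything is correct and is the argument the authors have in mind.

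The step you flagged is, however, a genuine gap, and your two proposed escapes do not close it. From $s_1(y_1,P,\mu)\cdot\mu\geq 0$ and $s_1\geq 0$ you cannot conclude $\mu\geq 0$: if $s_1(y_1,P,\mu)=0$ the inequality is vacuous. This is not merely a proof-technical worry --- the theorem as literally stated is false under its stated hypotheses. Take $s_1\equiv 0$, $s_2\equiv 0$, $\ell\equiv 0$; these are non-negative everywhere and satisfy the displayed inequality with equality, and $\ell(y_1,P)=0\geq 0$ for all $(y_1,P)$, yet nothing forces $\W_N(y_1,P)\subseteq[0,\infty)$. Saying the degenerate case ``may be excluded without loss of generality'' is not an argument: the hypotheses admit it, so either the hypotheses must be strengthened or the certificate must be restructured. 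The standard $S$-procedure form for the containment $\{x_3: w_N(x_1,x_2,x_3)\geq 1\}\subseteq\{x_3: x_3\geq 0\}$ puts coefficient $1$ (not a multiplier) on the target function, i.e.\ one should require
\begin{align}
x_3-\ell(x_1,x_2)-s_2(x_1,x_2,x_3)\cdot\big(w_N(x_1,x_2,x_3)-1\big)\geq 0,\nonumber
\end{align}
or equivalently demand that $s_1$ be bounded below by a positive constant on the compact set $\Y_1\times\PP\times\Y_1$; with either fix your chain of inequalities immediately yields $\mu\geq\ell(y_1,P)/s_1(y_1,P,\mu)\geq 0$ (or $\mu\geq\ell(y_1,P)\geq 0$) and the proof is complete. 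You deserve credit for spotting the hole; you should state the needed strengthening explicitly rather than waving it away.
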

\noindent Given a feasible solution to \FRS{}, one can construct polynomial functions $s_1,s_2,$ and $\ell$ offline that satisfy Theorem \ref{thm: set inclusion 1} using Sums-of-Squares programming \cite{zhao2017control,mohan2017synthesizing}.

Similarly we can utilize the following theorem to construct polynomial functions offline that allow us to verify whether safe $N$-step walking is feasible.
\begin{thm}
\label{thm:remainder set inclusions}
For each $\zeta \in \{1,2,\cdots,N-1\}$, suppose 
\begin{enumerate}
    \item \label{thm: set inclusion 2} $s^{y_2}_{\zeta,1},s^{y_2}_{\zeta,2}: \PP\times\Y_1\times\PP\rightarrow\R$ are functions that are non-negative everywhere and there exist functions $\ell^{y_2}_\zeta:\PP\times\PP\rightarrow\R$ that satisfy the following inequality
\begin{align}
    &s^{y_2}_{\zeta,1}(x_1,x_2,x_3)\cdot \big( \pi-f_{\hat y2}(x_3)-\overline{B}_2(x_1,x_2,x_3) \big) +\nonumber\\
    -&s^{y_2}_{\zeta,2}(x_1,x_2,x_3)\cdot x_2-\ell^{y_2}_\zeta(x_1,x_3)\geq0,
\end{align}
for every $(x_1,x_2,x_3)\in\PP\times\Y_1\times\PP$.
Then for each $\zeta \in \{1,2,\ldots,N-1\}$ and $P, P' \in \PP$, if $\ell^{y_2}_\zeta\big(P,P' \big)\geq0$ and $y_1 \in [0,\infty)$, then $f_{\hat y_2}\big( P' \big)
    +\overline{B}_2\big(P,y_1,P'\big)\leq\pi$.
    
    \item \label{thm: set inclusion 3} $s^{y_3}_{\zeta,1},s^{y_3}_{\zeta,2}: \Y_1\times\PP\rightarrow\R$ are functions that are non-negative everywhere, $\epsilon$ is a small positive number, and there exist functions $\ell^{y_3}_\zeta:\PP\rightarrow\R$ that satisfy the following inequality
\begin{align}
    & s^{y_3}_{\zeta,1}(x_1,x_2)\cdot\big( f_{\hat y_3}(x_1,x_2)+\underline{B}_3(x1,x2)-\pi/2-\epsilon \big) + \nonumber\\
    -& s^{y_3}_{\zeta,2}(x_1,x_2)\cdot x_1-\ell^{y_3}_\zeta(x_2)\geq0
\end{align}
for every $(x_1,x_2)\in\Y_1\times\PP$.
Then for each $\zeta \in \{1,2,\ldots,N-1\}$ and $P\in\PP$, if  $\ell^{y_3}_\zeta\big(P \big)\geq0$ and $y_1 \in [0,\infty)$, then $f_{\hat y_3}\big(y_1, P\big) +\underline{B}_3\big(y_1,P\big)>\pi/2$.
\item  \label{thm: set inclusion 4} $s^{y_4}_{\zeta,1},s^{y_4}_{\zeta,2}: \Y_1\times\PP\rightarrow\R$ are functions that are non-negative everywhere, $\epsilon$ is a small positive number, and there exists $\ell^{y_4}_\zeta:\PP\rightarrow\R$ that satisfy the following inequality
\begin{align}
    & s^{y_4}_{\zeta,1}(x_1,x_2)\cdot \big( 3\pi/2-\epsilon - f_{\hat y_4}(x_1,x_2) - \overline{B}_4(x1,x2)\big) + \nonumber\\
    -& s^{y_4}_{\zeta,2}(x_1,x_2)\cdot x_1-\ell^{y_4}_\zeta(x_2)\geq0
\end{align}
for every $(x_1,x_2)\in\Y_1\times\PP$.
Then for each $\zeta \in \{1,2,\ldots,N-1\}$ and $P\in\PP$, if $\ell^{y_4}_\zeta\big(P(i+\zeta) \big)\geq0$ and $y_1 \in [0,\infty)$, then $f_{\hat y_4}\big(y_1, P\big)
    +\overline{B}_4\big(y_1,P\big)<3\pi/2$.
\end{enumerate}
\end{thm}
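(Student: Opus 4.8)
The plan is to treat each of the three items in Theorem \ref{thm:remainder set inclusions} identically: each is a direct application of the $S$-procedure in exactly the same style as Theorem \ref{thm: set inclusion 1}, specialized to the relevant scalar output constraint rather than to a set-inclusion on the FRS. So the structure of the proof is three parallel paragraphs, one per item, and within each I would follow the same three moves.

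First, for item \ref{thm: set inclusion 2}: fix $\zeta \in \{1,\ldots,N-1\}$ and $P,P' \in \PP$ with $\ell^{y_2}_\zeta(P,P') \ge 0$, and fix $y_1 \in [0,\infty)$. Substitute $x_1 = P$, $x_2 = y_1$, $x_3 = P'$ into the hypothesized polynomial inequality. Since $s^{y_2}_{\zeta,2}$ is non-negative everywhere and $y_1 = x_2 \ge 0$, the term $-s^{y_2}_{\zeta,2}(P,y_1,P')\cdot y_1$ is non-positive, and $-\ell^{y_2}_\zeta(P,P')$ is non-positive by assumption; hence from the inequality we get $s^{y_2}_{\zeta,1}(P,y_1,P')\cdot\big(\pi - f_{\hat y_2}(P') - \overline{B}_2(P,y_1,P')\big) \ge 0$. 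Now I would want to conclude the second factor is itself non-negative. As in the $S$-procedure argument behind Theorem \ref{thm: set inclusion 1}, this follows because $s^{y_2}_{\zeta,1}$ is non-negative: either $s^{y_2}_{\zeta,1} > 0$ at this point, in which case $\pi - f_{\hat y_2}(P') - \overline{B}_2 \ge 0$ directly, or $s^{y_2}_{\zeta,1} = 0$, in which case one invokes a continuity/limiting argument on the multipliers (the standard caveat in $S$-procedure lossless arguments, which the paper already implicitly uses). Either way $f_{\hat y_2}(P') + \overline{B}_2(P,y_1,P') \le \pi$, which is exactly \eqref{cond: fall_forward} evaluated at these arguments.

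Items \ref{thm: set inclusion 3} and \ref{thm: set inclusion 4} are handled verbatim the same way. For item \ref{thm: set inclusion 3}, fix $\zeta$ and $P\in\PP$ with $\ell^{y_3}_\zeta(P) \ge 0$ and $y_1 \in [0,\infty)$, substitute $x_1 = y_1$, $x_2 = P$, use $s^{y_3}_{\zeta,2} \ge 0$ together with $y_1 \ge 0$ to kill the middle term and $\ell^{y_3}_\zeta(P) \ge 0$ to kill the last, obtaining $s^{y_3}_{\zeta,1}(y_1,P)\cdot\big(f_{\hat y_3}(y_1,P) + \underline{B}_3(y_1,P) - \pi/2 - \epsilon\big) \ge 0$, hence $f_{\hat y_3}(y_1,P) + \underline{B}_3(y_1,P) \ge \pi/2 + \epsilon > \pi/2$, i.e.\ \eqref{cond: hip_lower}. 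The $\epsilon$ is there precisely to convert the weak inequality produced by the $S$-procedure into the strict inequality \eqref{cond: hip_lower} demands. Item \ref{thm: set inclusion 4} is the mirror image: substitute $x_1 = y_1$, $x_2 = P$, conclude $3\pi/2 - \epsilon - f_{\hat y_4}(y_1,P) - \overline{B}_4(y_1,P) \ge 0$, hence $f_{\hat y_4}(y_1,P) + \overline{B}_4(y_1,P) \le 3\pi/2 - \epsilon < 3\pi/2$, i.e.\ \eqref{cond: hip_upper}.

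The routine part is the sign bookkeeping; there are no estimates to grind through since the hypothesized inequalities do all the work. The one genuinely delicate point — the only real obstacle — is the passage from ``$s_1 \cdot (\text{factor}) \ge 0$ with $s_1 \ge 0$'' to ``factor $\ge 0$'', which is not valid pointwise at zeros of $s_1$ and requires the standard lossless-$S$-procedure justification (a limiting argument over multipliers, or restricting attention to the relevant superlevel/ sublevel sets where the multipliers are designed to be active). Since Theorem \ref{thm: set inclusion 1} already relies on exactly this step via the $S$-procedure reference \cite{boyd1994linear}, I would simply point to that precedent rather than re-derive it, keeping the proof of Theorem \ref{thm:remainder set inclusions} to three short symmetric paragraphs.
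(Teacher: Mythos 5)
Your overall strategy is exactly the one the paper intends: the paper in fact prints no proof of Theorem \ref{thm:remainder set inclusions} at all (nor of Theorem \ref{thm: set inclusion 1}), deferring entirely to the $S$-procedure citation, and your three parallel substitute-and-bookkeep paragraphs, including the observation that $\epsilon$ exists precisely to convert the weak inequality into the strict ones in \eqref{cond: hip_lower} and \eqref{cond: hip_upper}, are the argument that citation is standing in for. The sign bookkeeping in all three items is correct.

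However, the one step you flag as delicate and then defer is a genuine gap, and the justification you sketch for it does not work. From $s^{y_2}_{\zeta,1}(P,y_1,P')\cdot\bigl(\pi-f_{\hat y_2}(P')-\overline{B}_2(P,y_1,P')\bigr)\geq 0$ with $s^{y_2}_{\zeta,1}\geq 0$ you cannot conclude that the second factor is non-negative at a point where $s^{y_2}_{\zeta,1}$ vanishes, and no ``continuity/limiting argument on the multipliers'' recovers this: if $s^{y_2}_{\zeta,1}(P,y_1,P')=0$, the certificate inequality at that point carries no information whatsoever about the sign of $\pi-f_{\hat y_2}(P')-\overline{B}_2(P,y_1,P')$, and one can easily construct non-negative $s_1,s_2,\ell$ satisfying the hypothesized inequality while the conclusion fails (take $s^{y_2}_{\zeta,1}\equiv 0$, $s^{y_2}_{\zeta,2}\equiv 0$, $\ell^{y_2}_\zeta\equiv 0$). ``Losslessness'' of the $S$-procedure concerns the converse implication (necessity of multipliers), not this issue. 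The correct resolution is that the multiplier on the conclusion term must be bounded away from zero --- in the standard set-containment form of the $S$-procedure (and in the SOS implementations the paper cites) that coefficient is normalized to $1$, i.e.\ the certificate is $\bigl(\pi-f_{\hat y_2}-\overline{B}_2\bigr)-s^{y_2}_{\zeta,2}\cdot x_2 - s^{y_2}_{\zeta,3}\cdot\ell^{y_2}_\zeta\geq 0$ with non-negative multipliers on the \emph{hypothesis} terms only, after which the implication is immediate and pointwise. As stated, the theorem (and likewise Theorem \ref{thm: set inclusion 1}) is missing the hypothesis that $s^{y_2}_{\zeta,1}$, $s^{y_3}_{\zeta,1}$, $s^{y_4}_{\zeta,1}$ are strictly positive; your proof should either add that hypothesis explicitly or adopt the normalized certificate, rather than appeal to a limiting argument that cannot close the case $s_1=0$.
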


\noindent One can construct polynomial functions $\ell, \ell_\zeta^{y_2},\ell_\zeta^{y_3},$ and $\ell_\zeta^{y_4}$ offline that satisfy Theorem \ref{thm: set inclusion 1} using Sums-of-Squares programming \cite{zhao2017control,mohan2017synthesizing}. 
As we describe next, these functions allow us to represent the set inclusions conditions as inequality constraints that are amenable to online optimization.


\section{Model Predictive Control Problem}
\label{sec:mpc}

We use a MPC framework to select a gait parameter for RABBIT by solving the following nonlinear program:
\begin{align*}
	\min_{\substack{P(i)\\\vdots\\P(i+N-1)}} &  r\left(y(i),P(i),P(i+1),\cdots,P(i+N-1)\right)  \hspace*{0.7cm}(\text{OL})\\
	\text{s.t.} \hspace*{0.4cm}&\ell\big(y_1(i),P(i)\big)\geq0,\\
	& f_{\hat y2}\big( P(i) \big)
    +\overline{B}_2\big(P(i-1),y_1(i),P(i)\big)\leq\pi,\\
	&f_{\hat y3}\big( y_1(i), P(i) \big)
    +\underline{B}_3\big(y_1(i),P(i)\big)>\pi/2,\\
    &f_{\hat y4}\big( y_1(i), P(i) \big)
    +\overline{B}_4\big(y_1(i),P(i)\big)<3\pi/2,\\
    &\ell^{y_2}_\zeta\big(P(i+\zeta-1),P(i+\zeta) \big)\geq0, ~~~~\forall \zeta=1,\cdots,N-1,\\
    &\ell^{y_3}_\zeta\big(P(i+\zeta) \big)\geq0, \hspace{2.2cm}\forall \zeta=1,\cdots,N-1,\\
    &\ell^{y_4}_\zeta\big(P(i+\zeta) \big)\geq0, \hspace{2.2cm}\forall \zeta=1,\cdots,N-1,\\
	& P(i),P(i+1),\cdots,P(i+N-1)\in\PP
\end{align*}
where $r\in C^1(\Y\times\mathcal P^N; \R)$ is any user specified cost function.
Notice that (OL) is solved at the $i$-th mid-stance
and only the optimal $P(i)$ is applied to the RABBIT and the problem is then solved again for the $(i+1)$-st step. 
The constraints of (OL) together with Theorems \ref{thm: set inclusion 1} and \ref{thm:remainder set inclusions} lead to the following theorem:
\begin{thm}
Suppose that RABBIT is at the $i$-th mid-stance, then tracking the gait parameters associated with any feasible solution to $(OL)$ ensures that RABBIT can walk successfully for the next $N$-steps.
\end{thm}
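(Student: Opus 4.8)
The plan is to reduce the statement to Theorem~\ref{thm: walking successfully} by showing that a feasible solution of (OL) forces the four output inequalities $y_1(j)\ge0$, $y_2(j)\le\pi$, $y_3(j)>\pi/2$, $y_4(j)<3\pi/2$ to hold along the entire $N$-step horizon, where $\{y_1(j),y_2(j),y_3(j),y_4(j)\}$ denote the \emph{actual} output sequences produced by RABBIT when the optimal parameters $P(i),\dots,P(i+N-1)$ are tracked. Since the last constraint of (OL) places every $P(j)$ in $\PP$, Assumption~\ref{assum: modeling error bound} is available at every index, so each true output is sandwiched by the corresponding template output plus the relevant bounding function through \eqref{eq:B1}--\eqref{eq:B4}.

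The crux is propagating $y_1$. The first constraint $\ell\big(y_1(i),P(i)\big)\ge0$ together with Theorem~\ref{thm: set inclusion 1} gives $\W{}_N\big(y_1(i),P(i)\big)\subseteq[0,\infty)$. By \eqref{eq:B1} the true sequence obeys $y_1(j+1)\in\B\big(y_1(j),P(j)\big)$ for every $j$, which is exactly the one-step transition used in Definition~\ref{def:FRS}; because the intermediate parameters $P(i+1),\dots,P(i+N-1)$ all lie in $\PP$, the actual values $y_1(i+1),\dots,y_1(i+N)$ realize the existential quantifier of Definition~\ref{def:FRS} and therefore belong to $\W{}_N\big(y_1(i),P(i)\big)\subseteq[0,\infty)$. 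Combined with $y_1(i)\ge0$ — maintained as an MPC invariant, since either this is the initial solve (where it is a standing hypothesis consistent with RABBIT having reached the $i$-th mid-stance) or the previous iteration's constraint $\ell\big(y_1(i-1),P(i-1)\big)\ge0$ already placed $y_1(i)$ in $\W{}_N\big(y_1(i-1),P(i-1)\big)\subseteq[0,\infty)$ — this yields $y_1(j)\ge0$ for every index in the horizon.

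Next I would handle $y_2,y_3,y_4$. At index $i$, constraints~2--4 of (OL) together with \eqref{eq:B2}--\eqref{eq:B4} and the identities $\hat y_k(i)=f_{\hat y_k}(\cdot)$ from \eqref{eq:discrete-z} give $y_2(i)\le\pi$, $y_3(i)>\pi/2$, $y_4(i)<3\pi/2$ immediately. For the remaining indices $i+\zeta$ with $\zeta\in\{1,\dots,N-1\}$, the constraints $\ell^{y_2}_\zeta\ge0$, $\ell^{y_3}_\zeta\ge0$, $\ell^{y_4}_\zeta\ge0$ of (OL), combined with $y_1(i+\zeta)\in[0,\infty)$ from the previous step and the three parts of Theorem~\ref{thm:remainder set inclusions}, produce $f_{\hat y_2}\big(P(i+\zeta)\big)+\overline{B}_2(\cdot)\le\pi$, $f_{\hat y_3}(\cdot)+\underline{B}_3(\cdot)>\pi/2$, and $f_{\hat y_4}(\cdot)+\overline{B}_4(\cdot)<3\pi/2$; one further application of \eqref{eq:B2}--\eqref{eq:B4} bounds the true $y_2(i+\zeta),y_3(i+\zeta),y_4(i+\zeta)$ in the same way. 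With all four inequalities verified at every index that actually appears, and with the ``$0$-th step completed'' premise of Theorem~\ref{thm: walking successfully} supplied by the hypothesis that RABBIT is at the $i$-th mid-stance (so $t_i^{MS}$ is finite and, by the preceding solve, the hip stayed in $(\pi/2,3\pi/2)$ up to mid-stance), Theorem~\ref{thm: walking successfully}, re-indexed to start at step $i$, concludes that RABBIT walks successfully at each of the next $N$ steps.

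I expect the main difficulty to be bookkeeping rather than new mathematics: aligning the step index with the gait-parameter index (recall $P(j)$ governs the motion from the $j$-th to the $(j+1)$-st mid-stance) and with the horizon length $N$, so that every output entering Theorem~\ref{thm: walking successfully} is genuinely pinned down by a constraint of (OL) or by a preceding solve; making explicit the identification of ``$y_1(j+1)$ is generated by the RABBIT dynamics'' in Definition~\ref{def:FRS} with $y_1(j+1)\in\B\big(y_1(j),P(j)\big)$ from Assumption~\ref{assum: modeling error bound}, under the convention that the template outputs are re-initialized from the measured $y_1(i)$ at each solve; and settling the edge case $y_1(i)\ge0$ at the first solve so that the induction inside the proof of Theorem~\ref{thm: walking successfully} has its base case.
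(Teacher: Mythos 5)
Your argument is correct and follows exactly the route the paper intends: the paper offers no written proof beyond the remark that the constraints of (OL) together with Theorems \ref{thm: set inclusion 1} and \ref{thm:remainder set inclusions} imply the result, and your proposal is a faithful (and more careful) elaboration of that chain — FRS containment forces $y_1\geq 0$ over the horizon, the $\ell$-constraints plus the bounding functions force the remaining output inequalities, and Theorem \ref{thm: walking successfully} closes the argument. The bookkeeping points you flag (re-initializing the template from the measured $y_1(i)$, and the base case $y_1(i)\geq 0$) are genuine loose ends in the paper itself, not defects of your proof.
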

\section{Results}
\label{sec:results}
We evaluate our method on $300$ simulation trials in which the robot is required to track a random desired speed.
Our MATLAB implementation of these pair of experiments can be found at: \url{https://github.com/pczhao/TA_GaitDesign.git}.
In both experiments we set $N=3$.
The space of control parameter is restricted to be $\mathcal P=[0.25,2]\times[0.15,0.7]$.

We compare our method with a na\"ive method and the direct method. 
The na\"ive method uses the \HIP{} model to design a gait in an MPC framework.
The direct method uses the full-order dynamics of the RABBIT model to design a controller by solving an optimal control problem via FROST \cite{hereid2017frost}.
To simplify the comparison, each method performs optimization by minimizing the difference between a user specified speed and the speed of the model used during optimization. 

To apply our method, we begin by using the commercial solver MOSEK to compute an outer approximation to the \FRS{} and each of the  the $\ell$-functions in (OL) on a machine with 144 64-bit 2.40GHz Intel Xeon CPUs and 1 Terabyte memory.
Note bounding functions that satisfy Assumption \ref{assum: modeling error bound} are constructed through simulation and the ones used during our implementation can be found in the aforementioned repository.

\begin{figure}
  \centering  
  \includegraphics[trim =0in 0in 0in 0.in,width=0.85\columnwidth,clip=true]{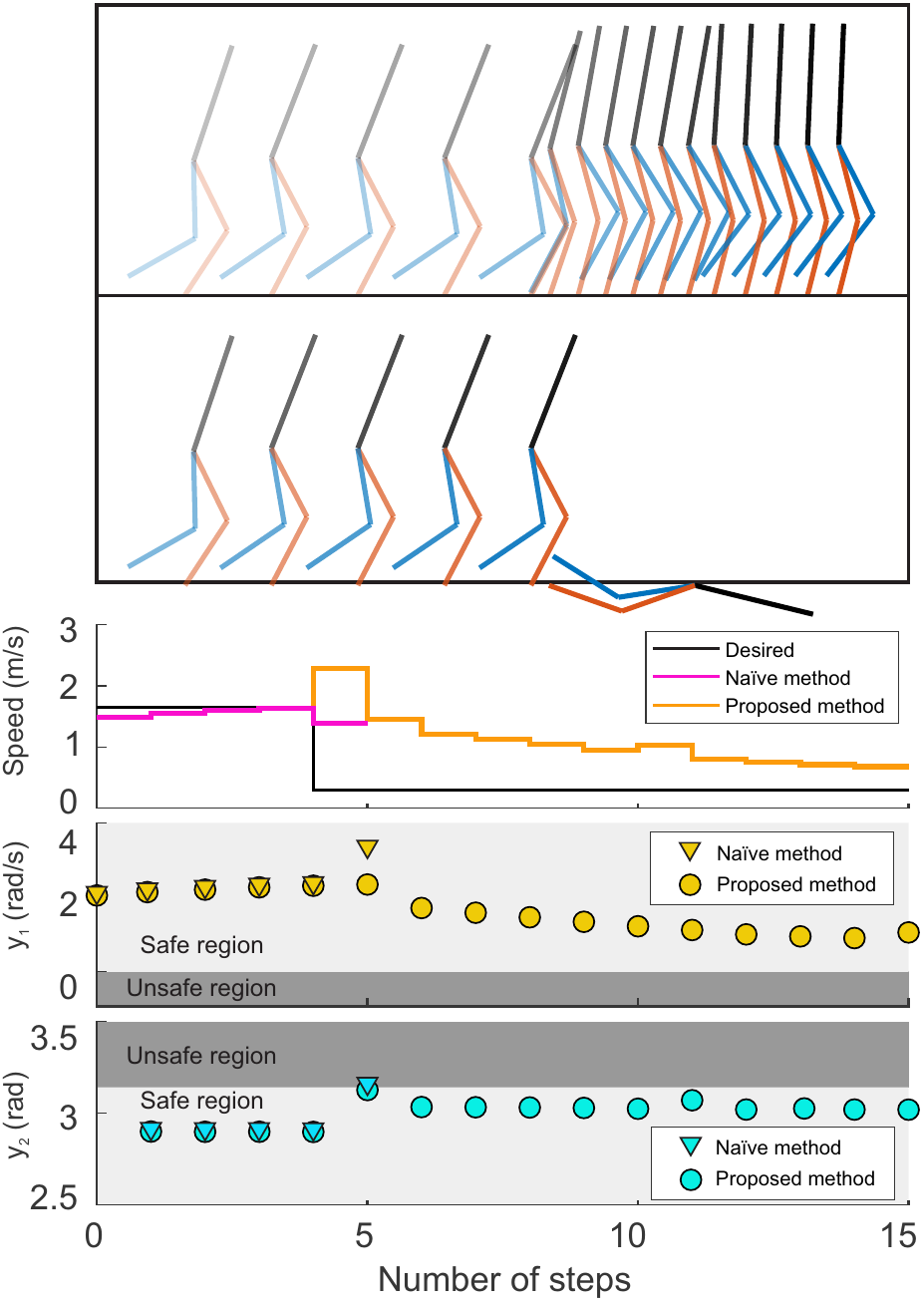}
  \caption{An illustration of the performance of the method proposed in this paper (top) and the \naive{} method (second from top). 
  Note that the rapid change in the desired speed (third from top) generates a gait which cannot be tracked by just considering a \HIP{} model without additional constraints.
  By ensuring that the outputs satisfy the inequality constraints proposed in Theorem \ref{thm: walking successfully} (bottom two sub-figures), the proposed method is able to safely track the synthesized gaits. 
  Note the \naive{} method violates the $y_2$ constraint proposed in Theorem \ref{thm: walking successfully} on Step 5.}
  \label{fig:PD1}
\end{figure}

Fig. \ref{fig:PD1} illustrates the performance of the na\"ive method and the method proposed in this paper on a sample trial. 
Note in particular that the gait generated by the na\"ive method is unable to be followed by the full-order RABBIT model. 
On the other hand, as shown in Fig. \ref{fig:PD1}, the method proposed in this paper is able to generate a gait that can satisfy the safety requirements described in Theorem \ref{thm: walking successfully}. 
This results in a controller which can track the synthesized gait without falling over.


Across all $300$ trials the computation time of the na\"ive method is $0.01$ seconds, the direct method is $93.12$ seconds, and the proposed method is $0.11$ seconds. 
Moreover, the RABBIT model falls 2$\%$ of the time with the na\"ive method, but never falls with the proposed method or the direct method.

\section{Conclusion}
\label{sec:conclusion}
This paper develops a method to generate safety-preserving controllers for full-order (anchor) models by performing reachability analysis on simpler (template) models while bounding the modeling error. 
The method is illustrated on a 5-link, $14$-dimenstional RABBIT model, and is shown to allow the robot to walk safely while utilizing controllers designed in a real-time fashion. 

Though this method enables real-time motion planning, future work will consider several extensions that will enable real-world robotic control.
First, a template and associated outputs need to be constructed for 3D motion. 
Second, no guarantee is provided that the optimization problem (OL) solved at each step in the MPC will return a feasible solution.


\renewcommand{\bibfont}{\normalfont\small}
{\renewcommand{\markboth}[2]{}
\printbibliography}

@inproceedings{Hereid2017FROST,
  title = {FROST: Fast Robot Optimization and Simulation Toolkit},
  author = {Hereid, Ayonga and Ames, Aaron D.},
  booktitle = {IEEE/RSJ International Conference on Intelligent Robots and Systems (IROS)},
  year = {2017},
  address = {Vancouver, BC, Canada},
  month = sep,
  organization = {IEEE/RSJ},
  month_numeric = {9}
}

@article{smit2019walking,
  title={Walking with Confidence: Safety Regulation for Full Order Biped Models},
  author={Smit-Anseeuw, Nils and Remy, C David and Vasudevan, Ram},
  journal={arXiv preprint arXiv:1903.08327},
  year={2019}
}

@article{veer2018safe,
  title={Safe adaptive switching among dynamical movement primitives: Application to 3d limit-cycle walkers},
  author={Veer, Sushant and Poulakakis, Ioannis},
  journal={arXiv preprint arXiv:1810.00527},
  year={2018}
}

@article{ames2017first,
  title={First steps toward formal controller synthesis for bipedal robots with experimental implementation},
  author={Ames, Aaron D and Tabuada, Paulo and Jones, Austin and Ma, Wen-Loong and Rungger, Matthias and Sch{\"u}rmann, Bastian and Kolathaya, Shishir and Grizzle, Jessy W},
  journal={Nonlinear Analysis: Hybrid Systems},
  volume={25},
  pages={155--173},
  year={2017},
  publisher={Elsevier}
}

@inproceedings{wieber2002stability,
  title={On the stability of walking systems},
  author={Wieber, Pierre-Brice},
  booktitle={Proceedings of the international workshop on humanoid and human friendly robotics},
  year={2002}
}

@inproceedings{motahar2016composing,
  title={Composing limit cycles for motion planning of 3D bipedal walkers},
  author={Motahar, Mohamad Shafiee and Veer, Sushant and Poulakakis, Ioannis},
  booktitle={2016 IEEE 55th Conference on Decision and Control (CDC)},
  pages={6368--6374},
  year={2016},
  organization={IEEE}
}

@article{chevallereau2003rabbit,
  title={Rabbit: A testbed for advanced control theory},
  author={Chevallereau, Christine and Abba, Gabriel and Aoustin, Yannick and Plestan, Franck and Westervelt, Eric and de Wit, Carlos Canudas and Grizzle, Jessy},
  journal={IEEE Control Systems Magazine},
  volume={23},
  number={5},
  pages={57--79},
  year={2003}
}

@article{ames2014rapidly,
  title={Rapidly exponentially stabilizing control lyapunov functions and hybrid zero dynamics},
  author={Ames, Aaron D and Galloway, Kevin and Sreenath, Koushil and Grizzle, Jessy W},
  journal={IEEE Transactions on Automatic Control},
  volume={59},
  number={4},
  pages={876--891},
  year={2014},
  publisher={IEEE}
}

@inproceedings{nguyen2016exponential,
  title={Exponential control barrier functions for enforcing high relative-degree safety-critical constraints},
  author={Nguyen, Quan and Sreenath, Koushil},
  booktitle={American Control Conference (ACC), 2016},
  pages={322--328},
  year={2016},
  organization={IEEE}
}

@article{zhao2017optimal,
  title={Optimal control for nonlinear hybrid systems via convex relaxations},
  author={Zhao, Pengcheng and Mohan, Shankar and Vasudevan, Ram},
  journal={arXiv preprint arXiv:1702.04310},
  year={2017}
}

@inproceedings{koolen2016balance,
  title={Balance control using center of mass height variation: limitations imposed by unilateral contact},
  author={Koolen, Twan and Posa, Michael and Tedrake, Russ},
  booktitle={Humanoid Robots (Humanoids), 2016 IEEE-RAS 16th International Conference on},
  pages={8--15},
  year={2016},
  organization={IEEE}
}

@inproceedings{nguyen2015optimal,
  title={Optimal Robust Control for Bipedal Robots through Control Lyapunov Function based Quadratic Programs.},
  author={Nguyen, Quan and Sreenath, Koushil},
  booktitle={Robotics: Science and Systems},
  year={2015}
}

@phdthesis{parrilo2000structured,
  title={Structured semidefinite programs and semialgebraic geometry methods in robustness and optimization},
  author={Parrilo, Pablo A},
  year={2000},
  school={California Institute of Technology}
}

@INPROCEEDINGS{tang2017invariant, 
author={J. Z. Tang and A. M. Boudali and I. R. Manchester}, 
booktitle={2017 IEEE International Conference on Robotics and Automation (ICRA)}, 
title={Invariant funnels for underactuated dynamic walking robots: New phase variable and experimental validation}, 
year={2017}, 
volume={}, 
number={}, 
pages={3497-3504}, 
keywords={legged locomotion;robot dynamics;experimental validation;invariant funnels;phase variable;sum-of-squares verification;transverse dynamics;underactuated dynamic walking robots;Hardware;Legged locomotion;Power system stability;Stability analysis;Switches;Trajectory}, 
doi={10.1109/ICRA.2017.7989400}, 
ISSN={}, 
month={May},}

@INPROCEEDINGS{nguyen2016dynamic, 
author={Q. Nguyen and A. Hereid and J. W. Grizzle and A. D. Ames and K. Sreenath}, 
booktitle={2016 IEEE 55th Conference on Decision and Control (CDC)}, 
title={3D dynamic walking on stepping stones with control barrier functions}, 
year={2016}, 
volume={}, 
number={}, 
pages={827-834}, 
keywords={Lyapunov methods;gait analysis;humanoid robots;legged locomotion;optimisation;path planning;robot dynamics;23 degree-of-freedom underactuated humanoid robot;3D dynamic walking;DURUS;control Lyapunov functions;control barrier functions;discrete footholds;footstep placement;periodic walking;real world terrain navigation;single optimization-based controller;step length;step width;stone stepping;Control systems;Foot;Humanoid robots;Legged locomotion;Robot kinematics;Three-dimensional displays}, 
doi={10.1109/CDC.2016.7798370}, 
ISSN={}, 
month={Dec},}

@INPROCEEDINGS{hsu2015control, 
author={S. C. Hsu and X. Xu and A. D. Ames}, 
booktitle={2015 American Control Conference (ACC)}, 
title={Control barrier function based quadratic programs with application to bipedal robotic walking}, 
year={2015}, 
volume={}, 
number={}, 
pages={4542-4548}, 
keywords={Lyapunov methods;control nonlinearities;invariance;iterative methods;legged locomotion;quadratic programming;CBF;CLF;QP based controllers;backstepping inspired approach;bipedal robot AMBER2;bipedal robotic walking;control Lyapunov functions;control barrier function;forward invariance;higher order derivatives;iterative methodology;joint limits;physical realizability constraints;quadratic programs;stable walking;Backstepping;Context;Foot;Legged locomotion;Lyapunov methods;Mathematical model}, 
doi={10.1109/ACC.2015.7172044}, 
ISSN={0743-1619}, 
month={July},}

@INPROCEEDINGS{shia2014convex, 
author={V. Shia and R. Vasudevan and R. Bajcsy and R. Tedrake}, 
booktitle={53rd IEEE Conference on Decision and Control}, 
title={Convex computation of the reachable set for controlled polynomial hybrid systems}, 
year={2014}, 
volume={}, 
number={}, 
pages={1499-1506}, 
keywords={Lyapunov methods;approximation theory;linear programming;nonlinear control systems;polynomials;BRS;LP;Lyapunov based approach;asymptotically vanishing conservatism;computationally tractable approximations;controlled polynomial hybrid systems;convex computation;infinite dimensional linear program;input constraints;nonlinear controlled hybrid systems;semialgebraic state;semialgebraic target set;semidefinite programs;time-limited backwards reachable set;Approximation methods;Mathematical model;Polynomials;Time measurement;Trajectory}, 
doi={10.1109/CDC.2014.7039612}, 
ISSN={0191-2216}, 
month={Dec},}

@inproceedings{posa2017balancing,
  title={Balancing and Step Recovery Capturability via Sums-of-Squares Optimization},
  author={Posa, Michael and Koolen, Twan and Tedrake, Russ},
  booktitle={2017 Robotics: Science and Systems Conference},
  volume={},
  pages={},
  year={2017}
}

@inproceedings{prajna,
  title={Safety verification of hybrid systems using barrier certificates},
  author={Prajna, Stephen and Jadbabaie, Ali},
  booktitle={International Workshop on Hybrid Systems: Computation and Control},
  pages={477--492},
  year={2004},
  organization={Springer}
}

@book{boyd1994linear,
  title={Linear matrix inequalities in system and control theory},
  author={Boyd, Stephen and El Ghaoui, Laurent and Feron, Eric and Balakrishnan, Venkataramanan},
  volume={15},
  year={1994},
  publisher={Siam}
}

@inproceedings{mohan2017synthesizing,
  title={Synthesizing the optimal luenberger-type observer for nonlinear systems},
  author={Mohan, Shankar and Liu, Jinsun and Vasudevan, Ram},
  booktitle={2017 IEEE 56th Annual Conference on Decision and Control (CDC)},
  pages={3658--3663},
  year={2017},
  organization={IEEE}
}

@inproceedings{zhao2017control,
  title={Control synthesis for nonlinear optimal control via convex relaxations},
  author={Zhao, Pengcheng and Mohan, Shankar and Vasudevan, Ram},
  booktitle={2017 American Control Conference (ACC)},
  pages={2654--2661},
  year={2017},
  organization={IEEE}
}

@book{westervelt2007feedback,
  title={Feedback control of dynamic bipedal robot locomotion},
  author={Westervelt, Eric R and Chevallereau, Christine and Choi, Jun Ho and Morris, Benjamin and Grizzle, Jessy W},
  year={2007},
  publisher={CRC press}
}

@article{kuo2007choosing,
  title={Choosing your steps carefully},
  author={Kuo, Arthur D},
  journal={IEEE Robotics \& Automation Magazine},
  volume={14},
  number={2},
  pages={18--29},
  year={2007},
  publisher={IEEE}
}

@article{vukobratovic1972stability,
  title={On the stability of anthropomorphic systems},
  author={Vukobratovi{\'c}, Miomir and Stepanenko, J},
  journal={Mathematical biosciences},
  volume={15},
  number={1-2},
  pages={1--37},
  year={1972},
  publisher={Elsevier}
}

@article{full1999templates,
  title={Templates and anchors: neuromechanical hypotheses of legged locomotion on land},
  author={Full, Robert J and Koditschek, Daniel E},
  journal={Journal of experimental biology},
  volume={202},
  number={23},
  pages={3325--3332},
  year={1999},
  publisher={The Company of Biologists Ltd}
}

@article{lygeros2012hybrid,
  title={Hybrid Systems: Foundations, advanced topics and applications},
  author={Lygeros, John and Sastry, Shankar and Tomlin, Claire},
  journal={under copyright to be published by Springer Verlag},
  year={2012}
}

@article{wight2008introduction,
  title={Introduction of the foot placement estimator: A dynamic measure of balance for bipedal robotics},
  author={Wight, Derek L and Kubica, Eric G and Wang, David W},
  journal={Journal of computational and nonlinear dynamics},
  volume={3},
  number={1},
  pages={011009},
  year={2008},
  publisher={American Society of Mechanical Engineers}
}

@article{kousik2018bridging,
  title={Bridging the gap between safety and real-time performance in receding-horizon trajectory design for mobile robots},
  author={Kousik, Shreyas and Vaskov, Sean and Bu, Fan and Johnson-Roberson, Matthew and Vasudevan, Ram},
  journal={arXiv preprint arXiv:1809.06746},
  year={2018}
}

@article{majumdar2017funnel,
  title={Funnel libraries for real-time robust feedback motion planning},
  author={Majumdar, Anirudha and Tedrake, Russ},
  journal={The International Journal of Robotics Research},
  volume={36},
  number={8},
  pages={947--982},
  year={2017},
  publisher={Sage Publications Sage UK: London, England}
}

@article{kousik2019safe,
  title={Safe, Aggressive Quadrotor Flight via Reachability-based Trajectory Design},
  author={Kousik, Shreyas and Holmes, Patrick and Vasudevan, Ramanarayan},
  journal={arXiv preprint arXiv:1904.05728},
  year={2019}
}

@inproceedings{herbert2017fastrack,
  title={FaSTrack: a modular framework for fast and guaranteed safe motion planning},
  author={Herbert, Sylvia L and Chen, Mo and Han, SooJean and Bansal, Somil and Fisac, Jaime F and Tomlin, Claire J},
  booktitle={2017 IEEE 56th Annual Conference on Decision and Control (CDC)},
  pages={1517--1522},
  year={2017},
  organization={IEEE}
}

@inproceedings{smith2019,
  title={Continuous Abstraction of Nonlinear Systems using Sum-of-Squares Programming},
  author={Smith, S.W. and Yin, H. and Arcak, M.},
  booktitle={58th Conference on Decision and Control},
  year={2019}
}

\end{document}